\def\eqref#1{equation~\ref{#1}}
\def\1{\bm{1}}
\DeclareMathAlphabet{\mathsfit}{\encodingdefault}{\sfdefault}{m}{sl}
\SetMathAlphabet{\mathsfit}{bold}{\encodingdefault}{\sfdefault}{bx}{n}
\newtheorem{theorem}{Theorem}[section]
\newtheorem{claim}[theorem]{Claim}
\def\Hess{\mathop{\rm Hess}}
\title{Mean-field Analysis of Batch Normalization}
\author{Mingwei Wei \\
Department of Physics and Astronomy\\
Northwestern University\\
Evanston, IL 60202, USA \\
\texttt{m.wei@u.northwestern.edu} \\
\And
James Stokes \\
Tunnel \\
New York, NY 10010, USA \\
\texttt{james@tunnel.tech} \\
\AND
David J Schwab \\
The Graduate Center \\
The City University of New York\\
New York, NY 10016 USA \\
\texttt{Dschwab@gc.cuny.edu}
}
\begin{document}

\maketitle

\begin{abstract}
Batch Normalization (BatchNorm) is an extremely useful component of modern neural network architectures, enabling optimization using higher learning rates and achieving faster convergence. In this paper, we use mean-field theory to analytically quantify the impact of BatchNorm on the geometry of the loss landscape for multi-layer networks consisting of fully-connected and convolutional layers. We show that it has a flattening effect on the loss landscape, as quantified by the maximum eigenvalue of the Fisher Information Matrix. These findings are then used to justify the use of larger learning rates for networks that use BatchNorm, and we provide quantitative characterization of the maximal allowable learning rate to ensure convergence. Experiments support our theoretically predicted maximum learning rate, and furthermore suggest that networks with smaller values of the BatchNorm parameter $\gamma$ achieve lower loss after the same number of epochs of training.
\end{abstract}

\section{Introduction}
Deep neural networks have achieved remarkable success in the past decade on tasks that were out of reach prior to the era of deep learning \citep{alexnet,resnet}. Amongst the myriad reasons for these successes are powerful computational resources, large datasets, new optimization algorithms, and modern architecture designs \citep{imagenet,kingma2015adam}. In many modern deep learning architectures, one key component is batch normalization (BatchNorm). BatchNorm is a module that can be introduced in layers of deep neural networks that normalizes hidden layer outputs to have a common first and second moment. Empirically, BatchNorm enables optimization using much larger learning rates, and achieves better convergence \citep{batchnorm}. 

Despite significant practical utility, a theoretical understanding of BatchNorm is still lacking. A widely held view is that BatchNorm improves training by ``reducing of internal covariate shift'' (ICF) \citep{batchnorm}. Internal covariate shift refers to the change in the input distribution of internal layers of the deep network due to changes of the weights. Recent results \citep{santurkar}, however, cast doubt on the ICF expalanation, by demonstrating that noisy BatchNorm increases ICF yet still improves training as in regular BatchNorm. This raises the question of whether the utility of BatchNorm is indeed related to the reduction of ICF. Instead, it is argued by \cite{santurkar} that BatchNorm actually improves the Lipschitzness of the loss and gradient. 

Meanwhile, dynamical mean-field theory \citep{sompolinsky1982relaxational}, a powerful theoretical technique, has recently been applied by \cite{NIPS2016_6322} to ensembles of multi-layer random neural networks. This theory studies networks with an i.i.d.~Gaussian distribution of weights and biases. Most recent work focuses on the analysis of order parameter flows and their fixed points \citep{schoenholz2016deep,xiao2018dynamical,yang2017residual}, including their stability and decay rates. Importantly, \cite{amari} also successfully used mean-field analysis to estimate the spectral properties of the Fisher Information Matrix. 

In this paper, we analytically quantify the impact of BatchNorm on the landscape of the loss function, by using mean-field theory to estimate the spectral properties of the Fisher Information Matrix (FIM) for typical batch-normalized neural networks. In particular, it is shown that BatchNorm reduces the maximal eigenvalue of the FIM provided that the normalization coefficient $\gamma$ is not too large. By drawing on results linking Fisher Information to the geometry of the loss function, we explain how BatchNorm neural networks can be trained with a larger learning rate without leading to parameter explosion, and provide upper bounds on the learning rate in terms of the BatchNorm parameters. As an additional contribution motivated by our theoretical findings, we demonstrate an empirical correlation between the BatchNorm parameter $\gamma$ and test loss. In particular, networks with smaller $\gamma$ achieve lower loss after a fixed number of training epochs.

\section{Preliminaries}
In our theoretical analysis, we employ the recent application of mean-field theory to neural networks which studies an ensemble of random neural networks with pre-defined i.i.d. Gaussian weights and biases. In this section, we provide background information and briefly recall the formalism of \cite{amari} which first computes spectral properties of the Fisher Information of a neural network and then relates it to the maximal stable learning rate.


\subsection{Fisher Information Matrix and Learning Dynamics}
Given a data distribution $\mathcal{D}$ over the set of instance-label pairs $\mathcal{X}\times \mathcal{Y}$, a family of parametrized functions $f_\theta : \mathcal{X} \to \mathcal{Y}$ and a loss function $l(f,y)$, our focus will be to ensure convergence of the following gradient descent with momentum update rule:
\begin{equation}\label{e:gd}
    \theta_{t+1} = \theta_t - \eta \nabla L(\theta_t) + \mu(\theta_t - \theta_{t-1}) \enspace ,
\end{equation}
where $L(\theta)$ is the unobserved population loss,
\begin{equation}\label{e:poploss}
    L(\theta) := \mathbb{E}_{(x,y)\sim \mathcal{D}} \big[l(f_\theta(x),y)\big] \enspace .
\end{equation}
In practice, the parameters are determined by minimizing an empirical estimate of \eqref{e:poploss} using a stochastic generalization (SGD) of the update rule \eqref{e:gd}. We neglect this difference by always working in the asymptotic limit of large sample size and moreover assuming full-batch gradient updates.

Suppose the loss function can be expressed in terms of a parametric family of positive densities as $l(f_\theta(x),y) =: - \log p_\theta(x,y)$. This assumption holds true for a large class of losses including squared loss and cross-entropy loss. Let $I_\theta$ denote the Fisher Information Matrix (FIM) associated with the parametric family induced by the loss,
\begin{equation}
    I_\theta := \mathbb{E}_{(x,y)\sim \mathbb{P}_\theta} \left[\nabla_\theta \log p_\theta(x,y) \otimes \nabla_\theta \log p_\theta(x,y)\right] \enspace ,
\end{equation}
where $\otimes$ denotes Kronecker product and $\mathbb{P}_\theta$ denotes the probability distribution over $\mathcal{X} \times \mathcal{Y}$ with density $p_\theta(x,y).$ Recall that under suitable regularity conditions the following identity holds:
\begin{equation}
    I_\theta = -\mathbb{E}_{(x,y)\sim \mathbb{P}_\theta} \left[\Hess_\theta \log p_\theta(x,y)\right] \enspace ,
\end{equation}
where $\Hess_\theta$ denotes the Hessian with respect to $\theta$.
The above right-hand side is closely related to the Hessian of the population loss,
\begin{align}
    \Hess\big(L(\theta)\big)
        & = -\mathbb{E}_{(x,y) \sim \mathcal{D}} \left[ \Hess_\theta \log p_\theta(x,y)\right] \enspace ,
\end{align}
where we interchanged the Hessian with the expectation value.
In fact, if we assume that the estimation problem is well-specified so that there exist parameters $\theta_\ast$ such that the data distribution is generated by $\mathbb{P}_{\theta_\ast}=\mathcal{D}$, then we obtain the following equality between the Hessian of the population loss and the FIM evaluated at the optimal parameters,
\begin{equation}
    \Hess\big(L(\theta_\ast)\big) = I_{\theta_\ast} \enspace .
\end{equation}
If $\theta$ is initialized in a sufficiently small neighborhood of $\theta_\ast$, then by expanding the population loss $L(\theta)$ to quadratic order about $\theta_\ast$ one can show that a necessary condition for convergence is that the step size is bounded from above by \citep{lecun2012efficient, amari}\footnote{In the quadratic approximation to the loss, the optimal learning rate is in fact $\eta_\ast /2$.},
\begin{equation}
    \eta < \eta_\ast := \frac{2(1+\mu)}{\lambda_{\rm max}\big(\Hess(L(\theta_\ast))\big)} = \frac{2(1+\mu)}{\lambda_{\rm max}(I_{\theta_\ast})} \enspace ,
\end{equation}
where $\lambda_{\rm max}(M)$ denotes the largest eigenvalue of the matrix $M$. Rather than computing the optimal parameters $\theta_\ast$ directly, we follow the strategy of \cite{amari} by estimating the following quantity and arguing that the distribution of the weights and biases is not significantly impacted by the training dynamics,
\begin{equation}
    \bar{\lambda}_{\rm max} := \underset{\theta}{\mathbb{E}}\left[\lambda_{\rm max}\left( I_\theta\right)\right] \enspace ,
\end{equation}
where $\mathbb{E}_\theta$ denotes the expectation value with respect to the weights and biases.
This heuristic was shown to yield a remarkably accurate prediction of the maximal learning rate in \citep{amari}.

In this paper we adopt the data modeling assumption that the joint density factors as $p_\theta(x,y) = p(x)p_\theta(y \, | \, x)$ where $p(\cdot)$ denotes the probability density of the marginal distribution of the covariates, which is independent of $\theta$. Under this factorization assumption, the FIM simplifies to
\begin{align}
    I_\theta
        & = \mathbb{E}_{(x,y)\sim\mathbb{P}_\theta} \big[ \nabla \log p_\theta(y \, | \, x) \otimes \nabla_\theta \log p_\theta(y \, | \, x)\big] \enspace .
\end{align}
Focusing on the Gaussian conditional model $p_\theta(y \, | x) \propto \exp(\frac{1}{2}\Vert f_\theta(x) - y \Vert_2^2)$, the FIM further simplifies to
\begin{equation}
    I_\theta = \mathbb{E}_{x \sim \mathcal{D}}\big[ \nabla_\theta f_\theta(x) \otimes \nabla_\theta f_\theta(x)\big] \enspace .
\end{equation}

The family of parametrized functions $f_\theta : \mathbb{R}^{N_0} \to \mathbb{R}^{N_L}$ is chosen to be the family of functions computed by a multi-layer neural network architecture with $N_0$ input nodes, $N_L$ output nodes and $L \geq 1$ layers. In this paper, we consider neural networks consisting of fully-connected (FC) and convolutional (Conv) layers, with and without batch normalization. The pointwise activation is denoted by $\sigma$, which is taken to be the rectified linear unit (ReLU) in this paper. Our analysis can be straightforwardly extended to other architectures and non-linearities. We use $h_\theta^l(x)$ to denote the output of layer $l$ and the input to layer $l+1$. Clearly we have $h^0_\theta(x) = x$ and $h^L_\theta(x) = f_\theta(x)$. 

\section{Theory}
In this section we focus on applying dynamical mean-field theory to study the effect of introducing batch normalization modules into a deep neural network by estimating the largest eigenvalue of the FIM. This estimate, in turn, provides an upper bound on the largest learning rate for which the learning dynamics is stable. This section is structured as follows: We first define various thermodynamic quantities (order parameters, $6$ for fully-connected layers and $9$ for convolutional layers) that satisfy recursion relations in the mean-field approximation. Then we present an estimate of $\bar{\lambda}_{\rm max}$ in terms of these order parameters, generalizing a result of \cite{amari}. Using this estimate, we
study how $\bar{\lambda}_{\rm max}$ and $\eta_\ast$ are affected by BatchNorm and calculate their dependence on the BatchNorm coefficient $\gamma$. Detailed derivations of the order parameters, their recursions, and the associated eigenvalue bound are deferred to the Supplementary Material.

\subsection{Fully Connected Layers}

A general fully connected layer with input activation $h^l(x)$ and output pre-activation $z^{l+1}(x)$ is described by the affine transformation,
\begin{align}
	z^{l+1}(x)
		& := W^{l+1} h^{l}(x) + b^{l+1} \enspace , 
\end{align}
where $W^{l+1} \in \mathbb{R}^{N_{l+1} \times N_{l}}$, $b^{l+1} \in \mathbb{R}^{N_{l+1}}$ and $N_{l}$ denotes the number of units in layer $l$. In the framework of mean-field theory, we will consider an ensemble of neural networks with Gaussian random weights and biases distributed as follows,
\begin{equation}
    [W^{l+1}]_{ij} \sim N(0,\sigma_{\rm w}^2 / N_{l}) \enspace , \quad \quad b^{l+1} \sim N(0, \sigma_{\rm b}^2 \, \mathbb{I}_{N_{l+1}}) \enspace .
\end{equation}
In the case of a standard fully connected layer, the input activation satisfies the recursions $h^l(x) = \sigma(z^l(x))$, where $\sigma$ denotes the pointwise activation.

A batch-normalized fully connected layer, in contrast, satisfies the following recursion,
\begin{align}
    h^{l}(x)
		& := \sigma\left(\frac{z^{l}(x) - \mu^{l}}{s^{l}} \odot \gamma_l + \beta_{l}\right)  \enspace ,
\end{align}
where $\odot$ denotes the elementwise (Hadamard) product, $\mu^l \in \mathbb{R}^{N_l}$ and $(s^l)^2 := s^l \odot s^l \in \mathbb{R}^{N_l}$ denote the mean and variance of the pre-activation layers with respect to the data distribution,
\begin{align}
    \mu^l
        & := \underset{x}{\mathbb{E}} \big[ z^l(x)\big]\enspace , \\
    (s^l)^2
        & := \underset{x}{\mathbb{E}}\big[ (z^l(x) - \mu^l)^2\big] \enspace .
\end{align}
The weights and biases are drawn from the same distributions as in the standard, no BatchNorm, case.
In addition, we now have the BatchNorm parameters $\gamma^{l+1},\beta^{l+1} \in \mathbb{R}^{N_{l+1}}$ which are assumed to be non-random for simplicity. We also fix $\beta_l = 0$

\subsubsection{Order Parameters and Their Recursions}

To investigate the spectral properties of the FIM, we define the following order parameters,
\begin{align}
    \Gamma_{l}
        & := \frac{1}{N_l}\underset{\theta}{\mathbb{E}}\, \underset{x\sim\mathcal{D}}{\mathbb{E}}\,\big\Vert z^l(x) \big\Vert^2 \enspace ,
     &
    \widetilde{\Gamma}_{l}
        & := \frac{1}{N_l}\underset{\theta}{\mathbb{E}}\, \Big\Vert\underset{x\sim\mathcal{D}}{\mathbb{E}}\, z^l(x)\Big\Vert^2 \enspace ,  \\
    H_{l}
        & := \frac{1}{N_l}\underset{\theta}{\mathbb{E}}\, \underset{x\sim\mathcal{D}}{\mathbb{E}}\,\big\Vert h^{l}(x) \big\Vert^2 \enspace ,
    &
    \widetilde{H}_{l}
        & := \frac{1}{N_l}\underset{\theta}{\mathbb{E}}\, \Big\Vert\underset{x\sim\mathcal{D}}{\mathbb{E}}\, h^l(x)\Big\Vert^2 \enspace , \\
    \Delta_l
        & := \underset{\theta}{\mathbb{E}}\, \underset{x\sim\mathcal{D}}{\mathbb{E}}\, \big\Vert \delta^l(x) \big\Vert^2 \enspace ,
    &
    \widetilde{\Delta}_l
        & := \underset{\theta}{\mathbb{E}}\, \Big\Vert\underset{x\sim\mathcal{D}}{\mathbb{E}}\, \delta^l(x)\Big\Vert^2 \enspace ,
\end{align}
where $\Vert \cdot \Vert$ denotes the Euclidean norm and $\delta ^l(x) := \frac{\partial f_\theta}{\partial z^l}(x)$. Here we assume that the data $x$ are drawn i.i.d. from a distribution with mean $0$ and variance $1$, and also that the last layer is linear for classification. We then have the base cases: $H_0 = 0$, $\widetilde{H}_0=1$, $\Delta_{L}=\widetilde{\Delta}_L = 1$. The order parameters in the absence of BatchNorm satisfy the following recursions derived in \cite{amari},
\begin{align}
    \Gamma_{l}
        & = \sigma_{\rm b}^2 + \sigma_{\rm w}^2 H_{l-1}  \enspace ,
     &
     \widetilde{\Gamma}_{l}
        & = \sigma_{\rm b}^2 + \sigma_{\rm w}^2 \widetilde{H}_{l-1} \enspace , \\
    H_{l}
        & = \frac{\Gamma_{l}}{2} \enspace ,
    &
    \widetilde{H}_{l}
        & = \frac{\Gamma_{l}}{2\pi}\left(\sqrt{1-\tilde{c}^2}+\frac{\tilde{c}\pi}{2}+\tilde{c}\sin^{-1}\tilde{c}\right) \enspace , \\
    \Delta_l
        & = \frac{\sigma_{\rm w}^2}{2} \Delta_{l+1} \enspace ,
    &
    \widetilde{\Delta}_l
        & = \frac{\sigma_{\rm w}^2\widetilde{\Delta}_{l+1}}{2\pi}\left(\frac{\pi}{2}+\sin^{-1}\tilde{c}\right) \enspace ,
\end{align}
where $\tilde{c} := \widetilde{\Gamma}_{l}/\Gamma_{l}$. In the case of batch normalization we find the following recursions, which are derived in the Supplementary Material, 
\begin{align}
    \Gamma_{l}
        & = \sigma_{\rm b}^2 + \sigma_{\rm w}^2 H_{l-1}  \enspace ,
     &
    \widetilde{\Gamma}_{l}
        & = \sigma_{\rm b}^2 + \sigma_{\rm w}^2 \widetilde{H}_{l-1} \enspace ,  \\
    H_{l}
        & = \frac{\gamma_{l}^2}{2} \enspace ,
    &
    \widetilde{H}_{l}
        & = \frac{\gamma_{l}^2}{2\pi} \enspace , \\
    \Delta_l
        & = \frac{\gamma_{l}^2\sigma_{\rm w}^2}{2} \frac{\Delta_{l+1}}{\Gamma_{l}} \enspace ,
    &
    \widetilde{\Delta}_l
        & = \frac{\gamma_{l}^2\sigma_{\rm w}^2}{4} \frac{\widetilde{\Delta}_{l+1}}{\Gamma_{l}} \enspace .
\end{align}
\subsection{Convolutional Layer}\label{sec:conv}
The mean field theory of convolutional layer was first studied by \cite{xiao2018dynamical}. In this paper, the results of the preceding section also apply to structured affine transformations including convolutional layers. Let $\mathcal{K}_l$ denote the set of allowable spatial locations of the the $l$th layer feature map and let $\mathcal{F}_{l+1}$ index the sites of the convolutional kernel applied to that layer. Let $C_l$ denote the number of input channels. The output of a general convolutional layer is of the form,
\begin{equation}
    z_\alpha^{l+1}(x) = \sum_{\beta \in \mathcal{F}_{l+1}} W_\beta^{l+1} h_{\alpha + \beta}^l(x) + b^{l+1} \enspace ,
\end{equation}
where $\alpha \in \mathcal{K}_{l+1}$,  $W_\beta^{l+1} \in \mathbb{R}^{C_{l+1}\times C_l}$ and $b^{l+1} \in \mathbb{R}^{C_{l+1}}$. The weights and biases are now distributed as
\begin{align}
    [W^{l+1}_\alpha]_{ij} \sim N(0,\sigma_{\rm w}^2 / N_{l}) \enspace , \quad \quad b^{l+1} \sim N(0, \sigma_{\rm b}^2 \, \mathbb{I}_{C_{l+1}}) \enspace .
\end{align}
where now $N_{l} := C_l |\mathcal{F}_{l+1}|$.
As in the fully connected case, we consider convolutional layers with both vanilla activation functions of the form $h^{l}_{\alpha}(x) := \sigma (z^{l}_\alpha(x))$ as  well as batch normalized convolutional layers, for which the input activations satisfy the recursive identity,
\begin{equation}
    h^{l}_{\alpha}(x) := \sigma \left(\frac{z^{l}_{\alpha}(x) - \mu^{l}_{\alpha}}{s^{l}_{\alpha}}\odot \gamma_l + \beta_l\right) \enspace , 
\end{equation}
\subsubsection{Order Parameters and Their recursions}
Similar to the definitions for fully connected layer, we define the following set of order parameters:
\begin{align}
    \Gamma_{l}
        & :=\frac{1}{C_l} \underset{\alpha}{\mathbb{E}}\,\underset{\theta}{\mathbb{E}}\, \underset{x\sim\mathcal{D}}{\mathbb{E}}\,\big\Vert z^l_\alpha(x) \big\Vert^2 \enspace ,
     &
    \widetilde{\Gamma}_{l}
        & := \frac{1}{C_l} \underset{\alpha}{\mathbb{E}}\,\underset{\theta}{\mathbb{E}}\, \Big\Vert\underset{x\sim\mathcal{D}}{\mathbb{E}}\, z^l_\alpha(x)\Big\Vert^2 \enspace ,  \\
    H_{l}
        & :=\frac{1}{C_l}\underset{\alpha}{\mathbb{E}}\,\underset{\theta}{\mathbb{E}}\, \underset{x\sim\mathcal{D}}{\mathbb{E}}\,\big\Vert h^l_\alpha(x) \big\Vert^2 \enspace ,
    &
    \widetilde{H}_{l}
        & := \frac{1}{C_l} \underset{\theta}{\mathbb{E}}\, \Big\Vert\underset{x\sim\mathcal{D}}{\mathbb{E}}\, h^l_\alpha(x)\Big\Vert^2 \enspace , \\
    \Delta_l
        & := \underset{\alpha}{\mathbb{E}}\,\underset{\theta}{\mathbb{E}}\, \underset{x\sim\mathcal{D}}{\mathbb{E}}\,\big\Vert \delta^l_\alpha(x) \big\Vert^2 \enspace ,
    &
    \widetilde{\Delta}_l
        & := 
        \underset{\alpha}{\mathbb{E}}\,\underset{\theta}{\mathbb{E}}\, \Big\Vert\underset{x\sim\mathcal{D}}{\mathbb{E}}\, \delta^l_\alpha(x)\Big\Vert^2 \enspace , \\
    \widehat{\Gamma}_{l}
        & := \frac{1}{C_l} \underset{\alpha\neq\beta}{\mathbb{E}}\,\underset{\theta}{\mathbb{E}}\left[\underset{x, x^\prime\sim\mathcal{D}}{\mathbb{E}}\langle z^{l}_{\alpha}(x), z^{l}_{\beta}(x^\prime)\rangle \right] \enspace ,
    &
    \widehat{H}_{l}
        & := \frac{1}{C_l}\underset{\alpha\neq\beta}{\mathbb{E}}\,\underset{\theta}{\mathbb{E}}\left[\underset{x, x^\prime\sim\mathcal{D}}{\mathbb{E}}\langle h^{l}_{\alpha}(x), h^{l}_{\beta}(x^\prime)\rangle \right] \enspace , \\
    \widehat{\Delta}_{l} 
        & :=
        \underset{\alpha\neq\beta}{\mathbb{E}}\,\underset{\theta}{\mathbb{E}}\left[\underset{x, x^\prime\sim\mathcal{D}}{\mathbb{E}}\langle \delta^{l}_{\alpha}(x), \delta^{l}_{\beta}(x^\prime)\rangle \right] \enspace ,
\end{align}
where now $\delta^l_{\alpha} := \partial f_\theta / \partial z^l_{\alpha}$ in analogy with the fully connected layer. The expectations over $\alpha$ and $\beta$ are with respect to the uniform measure over the set of allowed indices. For a standard convolutional layer without BatchNorm, the order parameters can be shown to satisfy the following recursion relations:

\begin{align}
    \Gamma_{l}
        & = \sigma_{\rm b}^2 + \sigma_{\rm w}^2 H_{l-1}  \enspace , &
    \widetilde{\Gamma}_{l}
        & = \sigma_{\rm b}^2 + \sigma_{\rm w}^2 \widetilde{H}_{l-1} \enspace , \\
     H_{l}
        & = \frac{\Gamma_{l}}{2} \enspace ,
    &
    \widetilde{H}_{l}
        & = \frac{\Gamma_{l}}{2\pi}\left(\sqrt{1-\tilde{c}^2}+\frac{\tilde{c}\pi}{2}+\tilde{c}\sin^{-1}\tilde{c}\right) \enspace , \\
    \Delta_l
        & = \frac{\sigma_{\rm w}^2}{2} \Delta_{l+1} \enspace ,
    &
    \widetilde{\Delta}_l
        & = \frac{\sigma_{\rm w}^2\widetilde{\Delta}_{l+1}}{2\pi}\left(\frac{\pi}{2}+\sin^{-1}\tilde{c}\right) \enspace , \\
    \widehat{\Gamma}_{l}
        & = \sigma_{\rm b}^2 + \sigma_{\rm w}^2 \widehat{H}_{l-1} \enspace ,
    &
    \widehat{H}_{l}
        & = \frac{\Gamma_{l}}{2\pi}\left(\sqrt{1-\hat{c}^2}+\frac{\hat{c}\pi}{2}+\hat{c}\sin^{-1}\hat{c}\right) \enspace , \\
    \widehat{\Delta}_{l} 
        & = \frac{\sigma_{\rm w}^2\widehat{\Delta}_{l+1}}{2\pi}\left(\frac{\pi}{2}+\sin^{-1}\hat{c}\right) \enspace .
\end{align}

In the case of convolutional layers with BatchNorm, the following recursions hold:

\begin{align}
    \Gamma_{l}
        & = \sigma_{\rm b}^2 + \sigma_{\rm w}^2 H_{l-1}  \enspace ,
     &
    \widetilde{\Gamma}_{l}
        & = \sigma_{\rm b}^2 + \sigma_{\rm w}^2 \widetilde{H}_{l-1} \enspace ,  \\
    H_{l}
        & = \frac{\gamma_{l}^2}{2} \enspace ,
    &
    \widetilde{H}_{l}
        & = \frac{\gamma_{l}^2}{2\pi} \enspace , \\
    \Delta_l
        & = \frac{\gamma_{l}^2\sigma_{\rm w}^2}{2} \frac{\Delta_{l+1}}{\Gamma_{l}} \enspace ,
    &
    \widetilde{\Delta}_l
        & = \frac{\gamma_{l}^2\sigma_{\rm w}^2}{4} \frac{\widetilde{\Delta}_{l+1}}{\Gamma_{l}} \enspace , \\
    \widehat{\Gamma}_{l}
        & = \sigma_{\rm b}^2 + \sigma_{\rm w}^2 \widehat{H}_{l-1} \enspace ,
    &
    \widehat{H}_{l}
        & = \frac{\gamma_{l}^2}{2\pi} \enspace , \\
    \widehat{\Delta}_{l} 
        & = \frac{\gamma_{l}^2\sigma_{\rm w}^2}{4} \frac{\widehat{\Delta}_{l+1}}{\Gamma_{l}} \enspace ,
\end{align}
where $\tilde{c} := \widetilde{\Gamma}_{l}/\Gamma_{l}$ and $\hat{c} := \widehat{\Gamma}_{l}/\Gamma_{l}$. The derivations of the recursion relations for both vanilla and batch-normalized convolutional layers are deferred to the Supplementary Material.
\subsection{Eigenvalue bound and thermodynamic variables}

The order parameters derived in the previous section are useful because they allow us to gain information about the maximal eigenvalue $\bar{\lambda}_{\rm max}$ of the FIM. We derived a generalization of \citep[Theorem 6]{amari} to allow for the inclusion of batch normalization and convolutional layers. In particular, we obtain a lower bound on the maximal eigenvalue $\bar{\lambda}_{\rm max}$ in terms of the previously introduced order parameters which satisfy the stated recursion relations in the mean-field approximation.

\begin{claim}
If the layer dimension $N_l$ of the fully connected layers and the number of channels $C_l$ of the convolutional layers satisfy $N_l \gg 1$ and $C_l \gg 1$ for $0<l<L$, we have, 
\begin{align}
    \bar{\lambda}_{\rm max} \geq \sum_{l\in [L]}f_l \enspace ,
\end{align}
where
\begin{equation}
    f_l =
    \begin{cases}
    N_{l-1}\widetilde{H}_{l-1}\widetilde{\Delta}_l, & \mathrm{FC} \\
    C_{l-1}|\mathcal{F}_l|\left[(|\mathcal{K}_l|-1)\widehat{\Delta}_{l} + \widetilde{\Delta}_l\right]\left[(|\mathcal{K}_l|-1)\widehat{H}_{l-1} + \widetilde{H}_{l-1}\right], & \mathrm{Conv}
    \end{cases} \enspace .
\end{equation}
 The index sets $\mathcal{F}_l$ and $\mathcal{K}_l$ are defined in section \ref{sec:conv}. The order parameters are defined in the previous subsection.
\label{theorem}
\end{claim}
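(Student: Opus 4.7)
The plan is to lower-bound $\lambda_{\max}(I_\theta)$ by the Rayleigh quotient of a carefully chosen test vector and then evaluate it via the mean-field recursions. The starting point is the PSD comparison $I_\theta = \mathbb{E}_x[\nabla f (\nabla f)^\top] \succeq (\mathbb{E}_x \nabla f)(\mathbb{E}_x \nabla f)^\top$ coming from Jensen's inequality, from which $\lambda_{\max}(I_\theta) \geq (v^\top \mathbb{E}_x \nabla_\theta f)^2 / \|v\|^2$ for every nonzero $v$. For each layer $l$ I will construct a test vector $v^{(l)}$ supported only on the parameters of that layer whose Rayleigh quotient evaluates, after $\mathbb{E}_\theta$ and the standard mean-field decoupling of the forward activation $h^{l-1}$ from the backward sensitivity $\delta^l$ (which depend on disjoint blocks of random weights), to $f_l$. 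Because the $v^{(l)}$ are supported on disjoint parameter blocks, a single stacked test vector $v = \sum_l c_l v^{(l)}$ with coefficients chosen by the equality case of Cauchy--Schwarz yields the additive bound $\sum_l f_l$; equivalently one may simply take $v = \mathbb{E}_x \nabla_\theta f$ itself and exploit the block decomposition $\|v\|^2 = \sum_l \|\mathbb{E}_x \nabla_{\theta_l} f\|^2$.

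For a fully connected layer the natural choice is the rank-one matrix $v^{(l)}_{ij} = \hat\delta^l_i \hat h^{l-1}_j$ on $W^l$, where $\hat\delta^l := \mathbb{E}_x \delta^l$ and $\hat h^{l-1} := \mathbb{E}_x h^{l-1}$. Since $\partial f / \partial W^l_{ij} = \delta^l_i h^{l-1}_j$ and mean-field decoupling gives $\mathbb{E}_x[\delta^l_i h^{l-1}_j] \approx \hat\delta^l_i \hat h^{l-1}_j$, the numerator $(v^{(l)\top}\mathbb{E}_x\nabla f)^2$ and the denominator $\|v^{(l)}\|^2$ combine to give a Rayleigh quotient equal to $\|\hat\delta^l\|^2 \|\hat h^{l-1}\|^2$. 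Taking $\mathbb{E}_\theta$ and using forward--backward independence a second time produces $\widetilde\Delta_l \cdot N_{l-1}\widetilde H_{l-1} = f_l^{\mathrm{FC}}$.

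The convolutional case is the main obstacle. Weight sharing across spatial locations forces us to take $v^{(l)}_{W^l_\beta} = \bar D \otimes \bar H$ with the same rank-one matrix on every filter index $\beta \in \mathcal{F}_l$, where $\bar D$ and $\bar H$ are unit vectors aligned with $\sum_\alpha \hat\delta^l_\alpha$ and $\sum_\gamma \hat h^{l-1}_\gamma$ respectively. Substituting $\partial f / \partial (W^l_\beta)_{c,c'} = \sum_\alpha \delta^l_{\alpha,c} h^{l-1}_{\alpha+\beta, c'}$ and invoking translation invariance to replace the local patch sum $\sum_\beta \hat h^{l-1}_{\alpha+\beta}$ by $(|\mathcal{F}_l|/|\mathcal{K}_{l-1}|)\sum_\gamma \hat h^{l-1}_\gamma$ (sharp in the large-$|\mathcal{K}|$ limit) reduces the Rayleigh quotient to $|\mathcal{F}_l|\,|\mathcal{K}_l|^{-2} \|\sum_\alpha \hat\delta^l_\alpha\|^2 \|\sum_\gamma \hat h^{l-1}_\gamma\|^2$. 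Expanding these squared norms into single-site and distinct-pair contributions gives the identities $\mathbb{E}_\theta\|\sum_\alpha \hat\delta^l_\alpha\|^2 = |\mathcal{K}_l|[\widetilde\Delta_l + (|\mathcal{K}_l|-1)\widehat\Delta_l]$ and $\mathbb{E}_\theta\|\sum_\gamma \hat h^{l-1}_\gamma\|^2 = C_{l-1}|\mathcal{K}_l|[\widetilde H_{l-1} + (|\mathcal{K}_l|-1)\widehat H_{l-1}]$; combined with the $\delta$/$h$ independence used to factor the expectation of the product of the two squared norms, these reproduce exactly $f_l^{\mathrm{Conv}}$. The two delicate steps are the mean-field independence of $\delta^l$ from $h^{l-1}$ (standard in this literature for Gaussian networks in the wide limit) and the translation-averaging of the patch sum, which is where the hypotheses $N_l \gg 1$ and $C_l \gg 1$ are most essential.
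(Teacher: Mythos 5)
Your argument is correct and is essentially the paper's own: the lower bound $\lambda_{\max}(I_\theta)\geq\Vert\mathbb{E}_x\nabla_\theta f_\theta(x)\Vert^2$ via Jensen's inequality (which the paper states explicitly as an alternative to its all-ones Rayleigh-vector/V-statistic presentation of the empirical FIM), followed by the block decomposition over layers, retention of only the dominant $W_l$ gradients, the forward--backward mean-field factorization, and the diagonal versus distinct-pair spatial decomposition that produces the $\widetilde{(\cdot)}$ and $\widehat{(\cdot)}$ order parameters in the convolutional case. The only cosmetic difference is your per-layer test-vector/Cauchy--Schwarz packaging, which collapses to the same computation once you take $v=\mathbb{E}_x\nabla_\theta f$.
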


Now we are ready to calculate the lower bound on $\bar{\lambda}_{\rm max}$ for a given model architecture by calculating the order parameters using their recursions. In the next section, we will focus on the numerical analysis of these recursion relations as well as present experiments that support our calculation.

\section{Numerical Analysis and Experiments}
In order to understand the effect of BatchNorm on the loss landscape, we theoretically compute $\bar{\lambda}_{\rm max}$ as a function of the BatchNorm parameter $\gamma$, for both fully connected and convolutional architectures (Fig.~\ref{theoryfig}) with  and without BatchNorm. For $\gamma \lesssim 3$ (typical for deep network initialization \citep{batchnorm}) BatchNorm significantly reduces $\bar{\lambda}_{\rm max}$ compared to the vanilla networks. As a direct consequence of this, the theory predicts  that batch normalized networks can be trained using significantly higher learning rates than their vanilla counterparts.



We tested the above theoretical prediction by training the same architectures on MNIST and CIFAR-10 datasets, for different values of $\eta$ and $\gamma$, starting from randomly initialized networks with same variances employed in the mean-field theory calculations. As shown in Fig.~\ref{fig:experiment}, the $(\gamma, \log_{10} \eta)$-plane clearly partitions into distinct phases characterized by convergent and non-convergent optimization dynamics, and our theoretically predicted upper bound $\eta_\ast$ closely agrees with the experimentally determined phase boundary. The experiment of vanilla network is  shown in \ref{s:baseline} as a baseline.

In addition to the striking match between our theoretical prediction and the experimentally determined phase boundaries, the experimental results also suggest a tendency for smaller $\gamma$-initiations to produce lower values of test loss after a fixed number of epochs, i.e. faster convergence. We leave detailed investigation of this initialization scheme to future work. Also, dark strips can be observed in the heatmaps indicate the optimal learning rates for optimization, which is around $\eta_\ast/2$ and consistent with \cite{lecun2012efficient} in the quadratic approximation to the loss. Our analysis also suggests that small $\gamma$ initialization benefits the convergence of training. Additional experiments supporting this intuition can be found in Section \ref{s:additon} of Supplementary Material.

The architectural design for our experiments is as follows. In the fully connected architecture, we choose $L = 4$ layers with $N_l = 1000$ hidden units per layer except the final (linear) layer which has $N_L = 10$ outputs. Batch normalization is applied after each linear operation except for the final linear output layer. The convolutional network has a similar structure with $L = 4$ layers. The first three are convolutional layers with filter size $3$, stride $2$, and number of channels $C_1=30$, $C_2=60$, $C_3=90$. The final layer is a fully connected output layer to perform classification. The other architectural/optimization hyperparameters were chosen to be $\sigma_w^2 = 2$, $\sigma_b^2 = 0.5$, $\beta = 0$ and  $\mu = 0.9$. Momentum $\mu$ here was set to be $0.9$ to match the value frequently used in practice, which only affects the dependency of $\eta_\ast$ on FIM. 

\begin{figure}[t]
\begin{center}
\includegraphics[height=1.37in,width=5.48in]{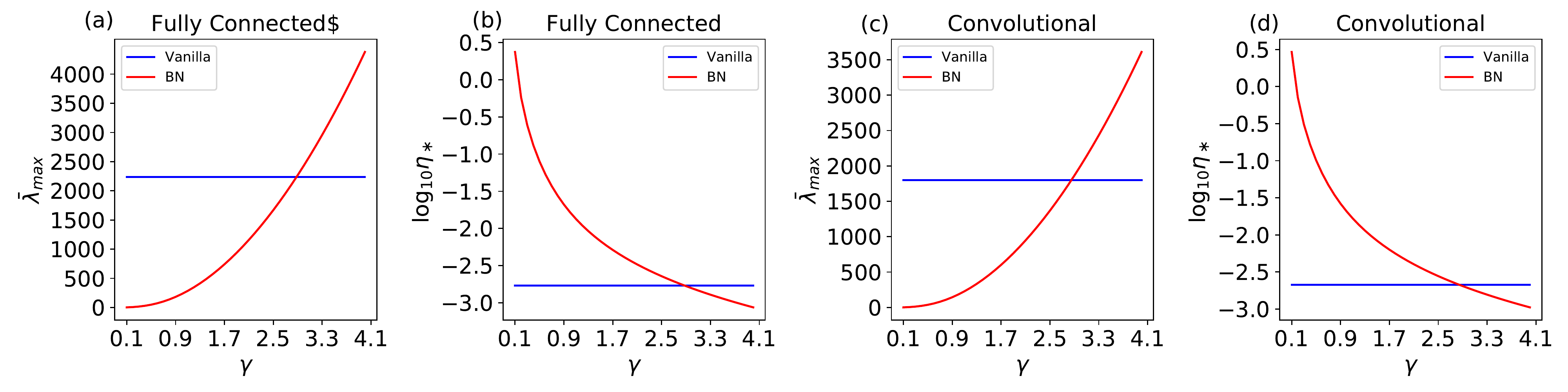}
\caption{
The maximum eigenvalue $\bar{\lambda}_{\rm max}$ and associated critical learning rate $\eta_\ast$ for vanilla (blue) and BatchNorm networks (red) as a function of the BatchNorm parameter $\gamma$ for different choices of architecture (fully-connected and convolutional), calculated by theory.
(a, c) shows the flattening effect of BatchNorm on the loss function for a wide range of hyperparameters and
(b, d) further show that for sufficiently small $\gamma$ BatchNorm enables optimization with much higher learning rate than vanilla networks. \label{theoryfig}}
\end{center}

\end{figure}

\begin{figure}[t]
\begin{center}
\includegraphics[height=1.8in,width=5.4in]{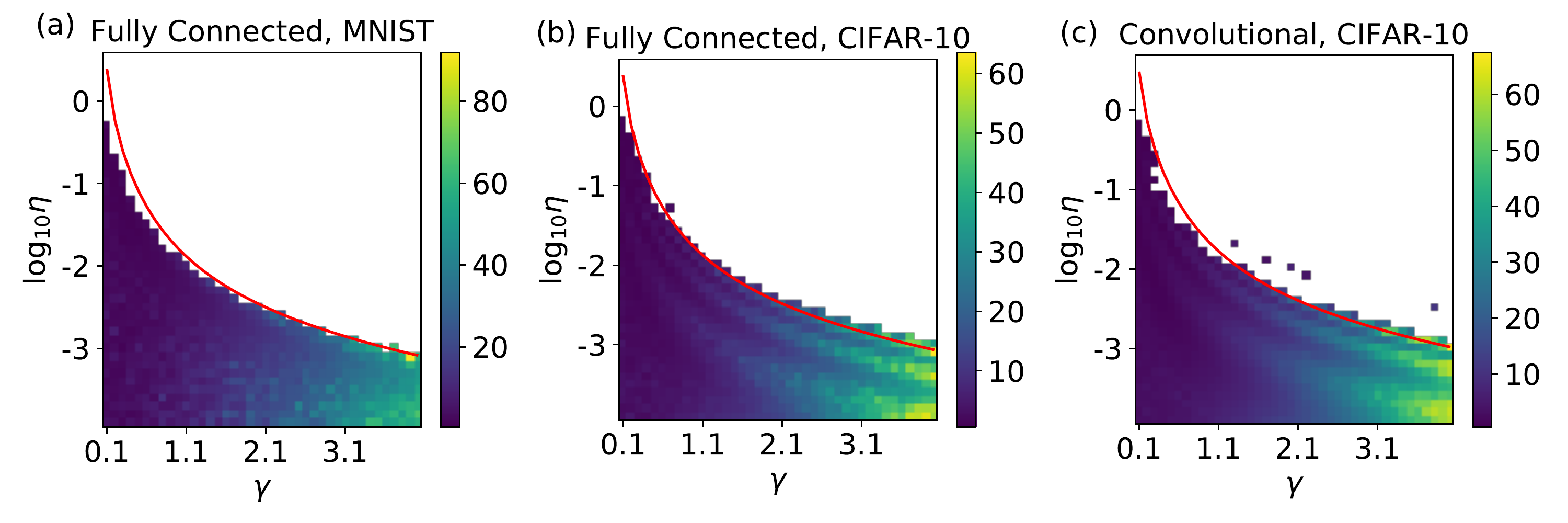}
\caption{Heatmaps showing test loss as a function of $(\log_{10}{\eta},\gamma$) after 5 epochs of training for different choices of dataset and architecture. Results were obtained by averaging 5 random restarts. The white region indicates parameter explosion for at least one of the runs. The red line shows the theoretical prediction for the maximal learning rate $\eta_\ast$. The dark band on the heatmaps for CIFAR-10 approximately tracks the optimal learning rate $\eta_\ast/2$ in the quadratic approximation to the loss. Note the log scale for the learning rate, so the theory matches the experiments over three orders of magnitude for $\eta$. \label{fig:experiment}}
\end{center}
\end{figure}


\section{Conclusion and Future Work}
In this paper, we studied the impact of BatchNorm on the loss surface of multi-layer neural networks and its implication for training dynamics. By developing recursion relations for the relevant order parameters, the maximum eigenvalue of the Fisher Information matrix $\bar{\lambda}_{\rm max}$ can be estimated and related to the maximal learning rate. The theory correctly predicts that adding BatchNorm with small $\gamma$ allows the training algorithm to exploit much larger learning rates, which speeds up convergence. The experiments also suggest that using a smaller $\gamma$ results in a lower test loss for a fixed number of training epochs. This suggests that initialization with smaller $\gamma$ may help the optimization process in deep learning models, which will be interesting for future study.

The close agreement between theoretical predictions and the experimentally determined phase boundaries strongly supports the validity of our analysis, despite the non-rigorous nature of the derivations. Although similar approaches have been used in other work \citep{NIPS2016_6322, schoenholz2016deep, yang2017residual, xiao2018dynamical, amari}, we hope that future work will place these results on a firmer mathematical footing. Furthermore, our BatchNorm analysis is not limited to the convolutional and fully-connected architectures we considered in this paper and can be extended to arbitrary feedforward architectures such as ResNets.

\subsubsection*{Acknowledgments}
This work was supported by National Science Foundation PHY-1734030 (DJS) and by the Simons Foundation program for the MMLS (DJS).

\bibliography{iclr2019_conference}
\bibliographystyle{iclr2019_conference}

\newpage
\section{Supplementary Material}
This section provides non-rigorous derivations of the order parameters, their recursions, and the associated eigenvalue bound. Despite the non-rigorous nature of these calculations, we remark that similar reasoning has been successfully used in a number of related works on mean-field theory, demonstrating impressive agreement with experiments \citep{NIPS2016_6322, schoenholz2016deep, yang2017residual, xiao2018dynamical, amari}. 

\subsection{Recursions for fully connected layers}
\begin{claim}
The forward recursions for $0 \leq l \leq L-1$ are $\Gamma_{l+1} = \sigma_{\rm b}^2 + \sigma_{\rm w}^2 H_{l}$ and $\widetilde{\Gamma}_{l+1} = \sigma_{\rm b}^2 +\sigma_{\rm w}^2\widetilde{H}_{l}$ where
 $H_{l} = \gamma_{l}^2/2$ and $\widetilde{H}_{l} = \gamma_l^2/(2\pi)$ for $l \in [L-1]$ whereas $H_0=1$ and $\widetilde{H}_0=0$.
\end{claim}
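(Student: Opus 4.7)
The plan is to derive each of the four identities in turn, in each case combining (i) the Gaussian prior on the weights and biases with (ii) the mean-field ansatz that, in the large-width limit, the pre-activations $z^l_i(x)$ behave as jointly Gaussian random variables across channels (justified by a CLT-type argument on the pre-activation sum), and (iii) the BatchNorm definition, which forces $(z^l_i(x) - \mu^l_i)/s^l_i$ to have zero mean and unit variance with respect to the data distribution. The base cases $H_0 = 1$, $\widetilde{H}_0 = 0$ follow immediately from the modeling assumption that $x$ has mean $0$ and variance $1$ coordinate-wise, since $h^0(x) = x$.

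For the two forward recursions on $\Gamma_{l+1}$ and $\widetilde{\Gamma}_{l+1}$, I would expand $z^{l+1}_i(x) = \sum_j W^{l+1}_{ij} h^l_j(x) + b^{l+1}_i$ and take the expectation over the independent Gaussian weights and biases. Independence of $W^{l+1}$ and $b^{l+1}$ from $h^l$ kills cross-terms, and the entry-wise variances $\sigma_{\rm w}^2/N_l$ and $\sigma_{\rm b}^2$ convert $\mathbb{E}_x\|h^l\|^2/N_l$ into $H_l$ (respectively $\|\mathbb{E}_x h^l\|^2/N_l$ into $\widetilde{H}_l$ after swapping the order of the outer $\mathbb{E}_\theta$ with the squaring), yielding $\Gamma_{l+1} = \sigma_{\rm b}^2 + \sigma_{\rm w}^2 H_l$ and the analogous expression for $\widetilde{\Gamma}_{l+1}$. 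This is essentially the same linearity-and-independence calculation as in the vanilla case of \cite{amari}; BatchNorm plays no role here because it acts after the affine map.

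For the two identities on $H_l$ and $\widetilde{H}_l$, I would use the mean-field assumption that $u_i(x) := (z^l_i(x) - \mu^l_i)/s^l_i$ is (in the infinite-width limit, and after averaging over weights/biases) a standard Gaussian variable in $x$, so that $h^l_i(x) = \sigma(\gamma_l u_i(x))$ is a ReLU applied to $\gamma_l$ times a standard Gaussian. Then $H_l$ reduces to the single-unit second moment $\gamma_l^2 \, \mathbb{E}[\max(u,0)^2]$, and evaluating the half-Gaussian integral $\int_0^\infty u^2 \phi(u)\, du = 1/2$ gives $H_l = \gamma_l^2/2$. Similarly, $\widetilde{H}_l$ reduces to the squared first moment $\gamma_l^2\,\bigl(\mathbb{E}[\max(u,0)]\bigr)^2$, where $\mathbb{E}[\max(u,0)] = 1/\sqrt{2\pi}$, yielding $\gamma_l^2/(2\pi)$.

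The main obstacle is justifying the mean-field / Gaussian-ansatz step: a priori $u_i(x)$ is a data-averaged, channel-averaged nonlinear functional of the previous weights, and treating it as a standard Gaussian requires invoking the same non-rigorous large-$N$ propagation of Gaussianity that underlies \cite{NIPS2016_6322,schoenholz2016deep,amari}. I would state this assumption explicitly, note that BatchNorm by construction fixes the first two moments so that only the Gaussian \emph{shape} needs to be assumed, and then proceed with the elementary half-Gaussian integrals above. Everything else is bookkeeping: substituting the computed $H_l$ and $\widetilde{H}_l$ back into the forward recursions gives the pair of $\Gamma$-recursions, and checking $l=0$ against the data moments closes the induction.
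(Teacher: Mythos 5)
Your proposal is correct and follows essentially the same route as the paper's derivation: expand the affine map, use independence of the Gaussian weights/biases from $h^l$ to get $\Gamma_{l+1}=\sigma_{\rm b}^2+\sigma_{\rm w}^2H_l$ and its tilde analogue, and evaluate $H_l$ and $\widetilde{H}_l$ as half-Gaussian moments of $\sigma(\gamma_l u)$ under the ansatz that the normalized pre-activation $u$ is standard Gaussian over the data. The only cosmetic difference is that you compute $\widetilde{H}_l$ directly as the square of the data-conditional first moment $\gamma_l/\sqrt{2\pi}$, whereas the paper reaches the same factorization by introducing two independent samples $x,x'$ and checking that the BatchNorm centering forces the cross-covariance $\Sigma^l_{xx'}$ (hence $\tilde{c}_l$) to vanish --- the same fact you invoke when noting that BatchNorm pins the first two moments, and the step that distinguishes this recursion from the vanilla arccosine-kernel one.
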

\begin{proof}
In general, we have
\begin{equation}
    \frac{1}{N_{l+1}}\underset{\theta}{\mathbb{E}} \langle z^{l+1}(x),z^{l+1}(x^\prime) \rangle = \sigma_{\rm b}^2 + \frac{\sigma_{\rm w}^2}{N_l} \langle h_\theta^l(x), h_\theta^l(x^\prime) \rangle \enspace.
\end{equation}
Thus, setting $l=0$ we obtain $q^1 = \sigma_{\rm w}^2 + \sigma_{\rm b}^2$
if we assume $\underset{x}{\mathbb{E}}\Vert x \Vert^2 = N_0$, and $\widetilde{\Gamma}_1 = \sigma_{\rm b}^2$ since $\underset{x,x^\prime}{\mathbb{E}}\langle x,x^\prime \rangle = \left\langle\mathbb{E}\, x, \mathbb{E} \, x^\prime \right\rangle = 0$.

Recall (for $l>0$) $z^{l+1}(x) = W^{l+1} \sigma_{l}\big(u^{l}(x) \odot \gamma_l \big)+ b^{l+1}$ where $u^l(x) = [z^l(x) - \mu^l]/s^l$ so
\begin{equation}
    \frac{1}{N_{l+1}}\underset{\theta}{\mathbb{E}} \langle z^{l+1}(x),z^{l+1}(x^\prime) \rangle = \sigma_{\rm b}^2 + \frac{\sigma_{\rm w}^2}{N_l} 
    \left\langle \sigma_l\big(u^l(x) \odot \gamma_l\big), \sigma_l\big(u^l(x^\prime) \odot \gamma_l\big) \right\rangle \enspace .
\end{equation}
Therefore, setting $x=x^\prime$ and taking expectation values over $x$ gives,
\begin{align}
	\Gamma_{l+1}
	    & := \frac{1}{N_{l+1}} \underset{x,\theta}{\mathbb{E}} \Vert z^{l+1}(x) \Vert^2 \enspace , \\
	    & = \sigma_{\rm b}^2 + \frac{\sigma_{\rm w}^2}{N_l} \,\underset{x, \theta}{\mathbb{E}} \, \Vert h^l(x) \Vert^2 \enspace , \\
	    & = \sigma_{\rm b}^2 + \sigma_{\rm w}^2 \, H_{l} \enspace , \\
	H_{l}
	    & := \frac{1}{N_l}\,\underset{x, \theta}{\mathbb{E}} \, \Vert h^l(x) \Vert^2 \enspace , \\
		& = \sigma_{\rm w}^2 \,\underset{x, \theta}{\mathbb{E}} \, \sigma_l^2\big(\gamma_l \, u^l(x)[1]\big) \enspace , \\
		& \simeq \sigma_{\rm w}^2 \int D z \, \sigma_{l}^2\left(\gamma_{l} \, z \right) \enspace , \\
	    & = \frac{\gamma_{l}^2}{2} \enspace ,
\end{align}
where we have approximated each component of the random vector $u^l(x)$ as a standard Gaussian.

Similarly, taking expectations over $x,x^\prime$ gives
\begin{align}
	\widetilde{\Gamma}_{l+1}
		& = \sigma_{\rm b}^2 + \sigma_{\rm w}^2 \, \underset{x,x^\prime,\theta}{\mathbb{E}}\,\sigma_{l}\big(u^{l}(x)[1] \, \gamma_{l}\big)\sigma_{l}\big(u^{l}(x^\prime)[1] \, \gamma_{l}\big) \enspace .
\end{align}
Consider the approximation in which the random pair $(u^{l}(x)[1],u^{l}(x^\prime)[1])$ is Gaussian distributed with zero mean and covariance,
\begin{align}
    \Sigma^l
    :=
    \begin{pmatrix}
    \Sigma^l_{xx} & \Sigma^l_{xx^\prime} \\
    \Sigma^l_{xx^\prime} & \Sigma^l_{x^\prime x^\prime}
    \end{pmatrix}
    :=
    \begin{pmatrix}
        \mathbb{E} \, u^l(x)[1]^2 & \mathbb{E} \, u^l(x)[1]u^l(x^\prime)[1] \\
        \mathbb{E} \, u^l(x)[1]u^l(x^\prime)[1]     & \mathbb{E} \, u^l(x^\prime)[1]^2
    \end{pmatrix} \enspace .
\end{align}
Then the recursion becomes
\begin{align}
    \widetilde{\Gamma}_{l+1}
		& \simeq \sigma_{\rm b}^2 + \sigma_{\rm w}^2 \int Dz_1 \, Dz_2 \, \sigma_{l}\Big(\sqrt{\Sigma^l_{xx}} \, z_1 \gamma_{l}\Big) \sigma_{l}\left[\sqrt{\Sigma^l_{yy}} \left( \tilde{c}_l \,z_1+\sqrt{1-(\tilde{c}_l)^2} \,z_2 \right)\gamma_{l}\right] \enspace ,
\end{align}
where $\tilde{c}_l
		:= \Sigma^l_{xx^\prime}/ \sqrt{\Sigma^l_{xx}\Sigma^l_{x^\prime x^\prime}}$.
Observe that by independence of $x$ and $y$,
\begin{align}
    \underset{x,x^\prime}{\mathbb{E}} \big[u_l(x) u_l(x^\prime)\big]
        & = \underset{x,x^\prime}{\mathbb{E}} \left[\frac{z_l(x) z_l(x^\prime) + \mu_l^2 - \mu_l \big(z_l(x) + z_l(x^\prime)\big)}{s_l^2}\right] \enspace , \\
        & =  \frac{\mathbb{E}_x z_l(x) \mathbb{E}_yz_l(x^\prime) + \mu_l^2 - \mu_l \big(\mathbb{E}_xz_l(x) + \mathbb{E}_y z_l(x^\prime)\big)}{s_l^2} \enspace , \\
        & = 0 \enspace .
\end{align}
Thus $\Sigma^l_{xx^\prime} =0$ and consequently,
\begin{align}
\widetilde{\Gamma}_{l+1}
	& = \sigma_{\rm b}^2 + \sigma_{\rm w}^2 \widetilde{H}_{l} \enspace , \\
\widetilde{H}_{l}
    & = \frac{\gamma_{l}^2}{2\pi} \enspace .
\end{align}
\end{proof}
The derivation here assumes an infinitely large dataset. For a dataset of size $m$, an error is introduced from the non-zero ratio of $m/m_{x\neq x^\prime}$ where $m_{x\neq x^\prime}$ denotes the total number of sample pairs $(x,x^\prime)$ where $x\neq x^\prime$. We have $m_{x\neq x^\prime} = m^2 - m$ and therefore the error is $O(1/m)$ which is negligible for most of the frequently-used datasets.

\begin{claim}
The backward recursions for $l \in [1,L-1]$ are $\Delta_l = \frac{\gamma_l^2 \sigma_{\rm w}^2}{2}\frac{\Delta_{l+1}}{\Gamma_{l}}$ and $\widetilde{\Delta}_l = \frac{\gamma_l^2 \sigma_{\rm w}^2}{4}\frac{\widetilde{\Delta}_{l+1}}{\Gamma_{l}}$ with base cases $\Delta_L = \widetilde{\Delta}_L = 1$.
\end{claim}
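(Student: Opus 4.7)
The plan is to differentiate the batch-normalized layer by the chain rule, average over the Gaussian weight matrix $W^{l+1}$ (which is statistically independent of every other factor in the resulting expression, since $\delta^{l+1}$ depends only on weights in layers $l+2,\dots,L$ while $s^l$ and $u^l$ depend only on weights in layers $\leq l$), and then close the recursion using the same mean-field self-averaging ansatz that produced the forward recursions in the previous claim. Writing $u^l(x) := (z^l(x) - \mu^l)/s^l$, differentiating $z^{l+1}(x) = W^{l+1}\sigma\bigl(\gamma_l u^l(x) + \beta_l\bigr) + b^{l+1}$ gives
\begin{equation}
\delta^l_j(x) \;=\; \frac{\gamma_l}{s^l_j}\,\sigma'\!\bigl(\gamma_l u^l_j(x)\bigr)\sum_i \delta^{l+1}_i(x)\,W^{l+1}_{ij},
\end{equation}
and the base case $\Delta_L = \widetilde{\Delta}_L = 1$ is immediate from linearity of the output layer.

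\textbf{Recursion for $\Delta_l$.} First I would square this expression, sum over $j$, take $\mathbb{E}_\theta\mathbb{E}_x$, and perform the Gaussian contraction $\mathbb{E}[W^{l+1}_{ij}W^{l+1}_{i'j}] = (\sigma_{\rm w}^2/N_l)\delta_{ii'}$ to obtain
\begin{equation}
\Delta_l \;=\; \frac{\sigma_{\rm w}^2}{N_l}\,\underset{\theta,x}{\mathbb{E}}\!\left[\big\Vert\delta^{l+1}(x)\big\Vert^2\sum_{j=1}^{N_l}\frac{\gamma_l^2}{(s^l_j)^2}\,\sigma'\!\bigl(\gamma_l u^l_j(x)\bigr)^{\!2}\right].
\end{equation}
I would then invoke the mean-field factorization: each $u^l_j(x)$ is treated as a standard Gaussian, so the ReLU identity $\sigma'(t)^2 = \mathbf{1}\{t>0\}$ gives $\mathbb{E}\,\sigma'(\gamma_l u^l_j)^2 = 1/2$, and $(s^l_j)^2$ self-averages to the thermodynamic value $\Gamma_l$. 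Combining these with $\mathbb{E}_\theta\mathbb{E}_x\Vert\delta^{l+1}(x)\Vert^2=\Delta_{l+1}$ produces the claimed $\Delta_l = \gamma_l^2\sigma_{\rm w}^2\,\Delta_{l+1}/(2\Gamma_l)$.

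\textbf{Recursion for $\widetilde{\Delta}_l$.} For the mean-subtracted version the same procedure applies to $\Vert\mathbb{E}_x\delta^l(x)\Vert^2 = \mathbb{E}_{x,x'}\langle\delta^l(x),\delta^l(x')\rangle$ with $x,x'$ drawn independently. The only new ingredient is that $u^l_j(x)$ and $u^l_j(x')$ are independent standard Gaussians, which is exactly the vanishing off-diagonal covariance $\Sigma^l_{xx'}=0$ already established in the forward derivation as a direct consequence of the BatchNorm mean subtraction. Hence
\begin{equation}
\underset{x,x'}{\mathbb{E}}\,\sigma'\!\bigl(\gamma_l u^l_j(x)\bigr)\,\sigma'\!\bigl(\gamma_l u^l_j(x')\bigr) \;=\; \bigl(\tfrac{1}{2}\bigr)^{\!2} \;=\; \tfrac{1}{4},
\end{equation}
and replaying the previous calculation with $1/4$ in place of $1/2$ yields $\widetilde{\Delta}_l = \gamma_l^2\sigma_{\rm w}^2\,\widetilde{\Delta}_{l+1}/(4\Gamma_l)$. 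The extra factor of $1/2$ relative to $\Delta_l$ is exactly the second, cross-sample ReLU indicator.

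\textbf{Main obstacle.} The only delicate step is the mean-field factorization itself: treating $\delta^{l+1}$ as statistically decoupled from the layer-$l$ diagonal factor $\gamma_l\sigma'(\gamma_l u^l_j)/s^l_j$, and replacing the random per-neuron variance $(s^l_j)^2$ by the deterministic order parameter $\Gamma_l$. These are precisely the self-averaging hypotheses that close the forward recursions in the previous claim and that underlie the prior mean-field literature on random deep networks. Once they are granted, the backward recursion reduces to a short Gaussian integral and requires no new ideas.
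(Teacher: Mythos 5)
Your proposal is correct and follows essentially the same route as the paper's own derivation: the chain rule through the BatchNorm layer, Gaussian contraction of $W^{l+1}$ under the assumption that the back-propagated signal is independent of the forward signal, self-averaging of $(s^l_j)^2$ to $\Gamma_l$, the ReLU integral $\int Dz\,\sigma'(\gamma_l z)^2 = 1/2$ for $\Delta_l$, and the vanishing cross-sample covariance $\Sigma^l_{xx'}=0$ to obtain the factor $1/4$ for $\widetilde{\Delta}_l$. The only cosmetic difference is that the paper writes the cross-sample term as an explicit bivariate Gaussian integral with correlation $\tilde c_l$ and then sets $\tilde c_l = 0$, whereas you factorize the expectation directly; these are equivalent.
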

\begin{proof}
For each layer $l \in [L]$ and for each unit $i \in [N_l]$ define $\delta_{l}[i] \in \mathbb{R}^{N_L}$ by
\begin{equation}
	\delta^{l}[i] := \frac{\partial h^L_{\theta}}{\partial z^l[i]} \enspace .
\end{equation}
In particular, since we assume linear output $\delta^L[i] = e_i$ and thus 
\begin{equation}
    \Delta_L 
        = \underset{x,\theta}{\mathbb{E}} \Vert \delta^L[\cdot] \Vert^2 \enspace = \mathbf{1} \enspace ,
\end{equation}
where $\mathbf{1} \in \mathbb{R}^{N_L}$ is the vector of ones. For ease of presentation and without loss of generality we restrict to the first component $\Delta_L[1] = 1$ and abuse notation by writing $\Delta_L = 1$. For $l < L$ we have by the chain rule,
\begin{align}
    \delta^l[i]
        & = \sum_{k \in [N_{l+1}]} \frac{\partial z^{l+1}[k]}{\partial z^l[i]} \delta^{l+1}[k] \enspace , \\
        & = \frac{\gamma_l[i]}{s^l[i]} \sigma_l'\left(u^l[i] \, \gamma_l[i]\right) \sum_{k\in[N_{l+1}]} [W^{l+1}]_{ki} \delta^{l+1}[k] \enspace .
\end{align}
Hence,
\begin{equation}
    \delta^l(x)[i]\delta^l(x^\prime) = \frac{\gamma_l[i]^2}{s^l[i]^2} \sigma_l'\left(u^l(x)[i] \, \gamma_l[i]\right)\sigma_l'\left(u^l(x^\prime)[i] \, \gamma_l[i]\right) \sum_{k,k'\in[N_{l+1}]} [W^{l+1}]_{ki}[W^{l+1}]_{k'i} \delta^{l+1}[k]\delta^{l+1}[k'] \enspace .
\end{equation}
Following the usual assumption that the back-propagated gradient is independent of the forward signal we obtain,
\begin{align}
    \underset{\theta}{\mathbb{E}} \langle \delta^l(x), \delta^l(x^\prime) \rangle
        & = \frac{\sigma_{\rm w}^2}{N_l}\sum_{i \in [N_l]}\underset{\theta}{\mathbb{E}}\left[\frac{\gamma_l[i]^2}{s^l[i]^2} \sigma_l'\left(u^l(x)[i] \, \gamma_l[i]\right)\sigma_l'\left(u^l(x^\prime)[i] \, \gamma_l[i]\right)\right]
        \underset{\theta}{\mathbb{E}}
        \langle \delta^{l+1}(x), \delta^{l+1}(x^\prime) \rangle
\end{align}
Setting $x=x'$ and taking expectation values over $x$ we obtain,
\begin{align}
	\Delta_{l}
	    & := \underset{x,\theta}{\mathbb{E}} \, \Vert \delta^l(x) \Vert^2 \enspace , \\
		& \simeq \frac{\Delta_{l+1}}{\Gamma_{l}} \, \gamma_l^2\sigma_{\rm w}^2 \int Dz \, \sigma_l'(\gamma_l z)^2\enspace , \\
		& =  \frac{\gamma_l^2\sigma_{\rm w}^2}{2} \frac{\Delta_{l+1}}{\Gamma_{l}} \enspace .
\end{align}

Similarly, taking expectation values over $x,x^\prime$ we obtain,
\begin{align}
	\widetilde{\Delta}_l
	    & = \underset{x,x^\prime,\theta}{\mathbb{E}}\, \langle \delta^l(x), \delta^l(x^\prime) \rangle \enspace , \\
		& = \sigma_{\rm w}^2 \gamma_l^2 \frac{\widetilde{\Delta}_{l+1}}{\Gamma_{l}} \int Dz_1 Dz_2 \sigma_l'\Big(\sqrt{\Sigma^l_{xx}} z_1 \, \gamma_l\Big) \sigma_l'\left[\sqrt{\Sigma^l_{yy}} \left(\tilde{c}_l \,z_1+\sqrt{1-(\tilde{c}_l)^2} \,z_2 \right)\gamma_{l}\right] \enspace , \\
		& = \frac{\sigma_{\rm w}^2 \gamma_l^2}{4} \frac{\widetilde{\Delta}_{l+1}}{\Gamma_{l}} \enspace .
\end{align}
\end{proof}

\subsection{Recursions for convolutional layers}
Recall that the output of a general convolutional layer is of the form,
\begin{equation}
    z_\alpha^{l+1}(x) = \sum_{\beta \in \mathcal{F}_{l+1}} W_\beta^{l+1} h_{\alpha + \beta}^l(x) + b^{l+1} \enspace ,
\end{equation}
where $\alpha \in \mathcal{K}_{l+1}$. 
As a concrete example, consider CIFAR-10 input of dimension $32\times32\times3$ which is mapped by a convolutional layer with $3\times3$ kernels, stride $2$ and $20$ output channels and no padding. Then $C_0 = 3$, $C_1 = 20$, $|\mathcal{K}_0| = 1024$ and $|\mathcal{F}_1| = 9$ and $|\mathcal{K}_1| = 225$. 

We begin by deriving some useful identities for convolutional layers, before specializing to the batch-normalized and vanilla networks. For each channel $i \in [C_l]$, we have,
\begin{align}
    \underset{\theta}{\mathbb{E}}\, z^{l+1}_\alpha(x)[i]z^{l+1}_{\beta}(x^\prime)[i]
        & = \sigma_{\rm b}^2 +\underset{\theta}{\mathbb{E}}\sum_{\substack{(\beta_1, j_1) \in \mathcal{F}_{l+1} \times [C_{l}] \\ (\beta_2, j_2) \in \mathcal{F}_{l+1} \times [C_{l}]}} \, [W^{l+1}_{\beta_1}]_{ij_1}[W^{l+1}_{\beta_2}]_{ij_2} h_{\alpha+\beta_1}^l(x)[j_1] h_{\beta + \beta_2}^l(x^\prime)[j_2]\enspace , \notag \\
        & = \sigma_{\rm b}^2 + \frac{\sigma_{\rm w}^2}{N_l}\sum_{\delta \in \mathcal{F}_{l+1}} \underset{\theta}{\mathbb{E}}\, \left\langle h_{\alpha + \delta}^l(x),
         h_{\beta + \delta}^l(x^\prime)\right\rangle \enspace .
\end{align}
Hence, 
\begin{equation}
    \frac{1}{C_{l+1}} 
    \underset{\theta}{\mathbb{E}}\,\left\langle z^{l+1}_\alpha(x) ,z^{l+1}_{\beta}(x^\prime)\right\rangle = \sigma_{\rm b}^2 + \frac{\sigma_{\rm w}^2}{N_l}\sum_{\delta \in \mathcal{F}_{l+1}} \underset{\theta}{\mathbb{E}}\, \left\langle h_{\alpha + \delta}^l(x),
         h_{\beta + \delta}^l(x^\prime)\right\rangle \enspace ,
\end{equation}
where $N_l := C_l |\mathcal{F}_{l+1}|$. 

Moreover, let us introduce the following shorthand,
\begin{align}
    \underset{\alpha}{\mathbb{E}} \, \Vert h_\alpha^l(x) \Vert^2
        & := \frac{1}{|\mathcal{K}_l|}\sum_{\alpha \in \mathcal{K}_l}
    \Vert h^l_{\alpha}(x)\Vert^2 \enspace , \\
    \underset{\alpha}{\mathbb{E}} \, \langle h_\alpha^l(x),h_\alpha^l(x^\prime)\rangle
     & := \frac{1}{|\mathcal{K}_l|}\sum_{\alpha \in \mathcal{K}_l}
    \left\langle h^l_{\alpha}(x),
    h^l_{\alpha}(x^\prime) \right\rangle \enspace , \\
    \underset{\alpha\neq\beta}{\mathbb{E}} \,\left\langle h^l_{\alpha}(x),
    h^l_{\beta}(x^\prime) \right\rangle
        & :=
    \frac{1}{|\mathcal{K}_l|(|\mathcal{K}_l|-1)}\sum_{\alpha,\beta \in \mathcal{K}_l \times \mathcal{K}_l : \alpha \neq \beta}
    \left\langle h^l_{\alpha}(x),
    h^l_{\beta}(x^\prime) \right\rangle
\end{align}
Then we can write the recursion relations as,
\begin{align}
    \Gamma_{l+1}
        & := \frac{1}{C_{l+1}} \underset{\alpha}{\mathbb{E}}\underset{x,\theta}{\mathbb{E}}\, \Vert z^{l+1}_\alpha(x) \Vert^2 \enspace , \\
        & = \underset{x}{\mathbb{E}}\left[
        \sigma_{\rm b}^2 + \frac{\sigma_{\rm w}^2}{N_l}\sum_{\delta \in \mathcal{F}_{l+1}}
        \underset{\theta}{\mathbb{E}}\,
        \underset{\alpha}{\mathbb{E}}\,
        \Vert h_{\alpha + \delta}^l(x) \Vert^2
        \right] \enspace , \\
        & = 
        \sigma_{\rm b}^2 + \frac{\sigma_{\rm w}^2}{C_l}
        \underset{x,\theta}{\mathbb{E}}\,
        \underset{\alpha}{\mathbb{E}}\,
        \Vert h_{\alpha}^l(x) \Vert^2 \enspace , \\
        & = \sigma_{\rm b}^2 + \sigma_{\rm w}^2 \, H_{l} \enspace , \\
    H_{l}
        & := \frac{1}{C_l}\underset{x,\theta}{\mathbb{E}}\,
        \underset{\alpha}{\mathbb{E}}\,
        \Vert h_{\alpha}^l(x) \Vert^2 \enspace . \label{e:cnn_forward1}
\end{align}
Furthermore we have,
\begin{align}
    \widetilde{\Gamma}_{l+1}
        & = \frac{1}{C_{l+1}} \underset{\alpha}{\mathbb{E}} \,\underset{x,x^\prime,\theta}{\mathbb{E}}\, \langle z_\alpha^{l+1}(x), z_\alpha^{l+1}(x^\prime) \rangle \enspace , \\
        & = \underset{x,x^\prime}{\mathbb{E}}\left[\sigma_{\rm b}^2 + \frac{\sigma_{\rm w}^2}{N_l} \sum_{\delta \in \mathcal{F}_{l+1}} \underset{\theta}{\mathbb{E}} \, \underset{\alpha}{\mathbb{E}}\,\langle h_{\alpha+\delta}^{l}(x), h_{\alpha+\delta}^{l}(x^\prime) \rangle\right] \enspace , \\
        & = \sigma_{\rm b}^2 + \sigma_{\rm w}^2 \, \widetilde{H}_{l} \enspace , \\
    \widetilde{H}_{l}
        & := \frac{1}{C_l} \underset{\alpha}{\mathbb{E}} \, \underset{x,x^\prime,\theta}{\mathbb{E}}\,\langle h_{\alpha}^{l}(x), h_{\alpha}^{l}(x^\prime) \rangle \enspace . \label{e:cnn_forward2}
\end{align}
Finally, we have,
\begin{align}
    \widehat{\Gamma}_{l+1}
        & := \underset{\alpha\neq\beta}{\mathbb{E}} \left[ \underset{x,x^\prime,\theta}{\mathbb{E}}\langle z_\alpha^{l+1}(x),z_{\beta}^{l+1}(x^\prime) \rangle \right] \enspace , \\
        & = \underset{x,x^\prime}{\mathbb{E}}\left[\sigma_{\rm b}^2 + \frac{\sigma_{\rm w}^2}{N_l} \sum_{\delta \in \mathcal{F}_{l+1}} \underset{\theta}{\mathbb{E}} \, \underset{\alpha\neq\beta}{\mathbb{E}}\langle h_{\beta+\delta}^{l}(x), h_{\alpha+\delta}^{l}(x^\prime) \rangle\right] \enspace , \\
        & = \underset{x,x^\prime}{\mathbb{E}}\left[\sigma_{\rm b}^2 + \frac{\sigma_{\rm w}^2}{C_l} \underset{\theta}{\mathbb{E}} \, \underset{\alpha\neq\beta}{\mathbb{E}}\langle h_{\beta}^{l}(x), h_{\alpha}^{l}(x^\prime) \rangle\right] \enspace , \\
        & = \sigma_{\rm b}^2 + \sigma_{\rm w}^2 \widehat{H}_{l} \enspace , \\
    \widehat{H}_{l}
        & := \frac{1}{C_l} \underset{\alpha\neq\beta}{\mathbb{E}} \, \underset{x,x^\prime,\theta}{\mathbb{E}}\,\langle h_{\alpha}^{l}(x), h_{\beta}^{l}(x^\prime) \rangle \enspace , \label{e:cnn_forward3}
\end{align}
where we used,
\begin{align}
    & \sum_{\delta \in \mathcal{F}_{l+1}}\sum_{\alpha,\beta \in \mathcal{K}_l \times \mathcal{K}_l : \alpha \neq \beta}
    \left\langle h^l_{\alpha + \delta}(x),
    h^l_{\beta+\delta}(x^\prime) \right\rangle
    = \\
    &\sum_{\delta \in \mathcal{F}_{l+1}} \left[
    \left\langle \sum_{\alpha \in \mathcal{K}_l} h^l_{\alpha + \delta}(x) ,
    \sum_{\beta \in \mathcal{K}_l} h^l_{\beta+\delta}(x^\prime)  \right\rangle - \sum_{\alpha \in \mathcal{K}_l}\left\langle h^l_{\alpha + \delta}(x),
    h^l_{\alpha+\delta}(x^\prime) \right\rangle\right] \\
    & = |\mathcal{F}_{l+1}|
    \left[
    \left\langle \sum_{\alpha \in \mathcal{K}_l}h^l_{\alpha}(x) ,
    \sum_{\beta \in \mathcal{K}_l}h^l_{\beta}(x^\prime) \right\rangle - \sum_{\alpha \in \mathcal{K}_l}\left\langle h^l_{\alpha}(x),
    h^l_{\alpha}(x^\prime) \right\rangle\right] \\
    & = |\mathcal{F}_{l+1}|
    \sum_{\alpha,\beta \in \mathcal{K}_l \times \mathcal{K}_l : \alpha \neq \beta}
    \left\langle h^l_{\alpha}(x),
    h^l_{\beta}(x^\prime) \right\rangle \enspace .
\end{align}
\subsubsection{Batch Normalization}
\begin{claim}
The forward recursions for $0\leq l \leq L-1$ are $\Gamma_{l+1} = \sigma_{\rm b}^2 + \sigma_{\rm w}^2 H_{l}$, $\widetilde{\Gamma}_{l+1} = \sigma_{\rm b}^2 + \sigma_{\rm w}^2\widetilde{H}_{l}$ and $\widehat{\Gamma}_{l+1} = \sigma_{\rm b}^2 + \sigma_{\rm w}^2 \widehat{H}_{l}$ where for each $l \in [L-1]$ we have
$H_{l} = \gamma_l^2/2$ and $\widetilde{H}_{l} = \widehat{H}_{l} = \gamma_l^2/(2\pi)$.
\end{claim}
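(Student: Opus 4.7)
The three $\Gamma$-recursions are immediate from the general convolutional identities already derived in this subsection (equations relating $\Gamma_{l+1},\widetilde{\Gamma}_{l+1},\widehat{\Gamma}_{l+1}$ to $H_l,\widetilde{H}_l,\widehat{H}_l$): those identities depend only on the affine part of the convolution and its random weights, and are therefore insensitive to whether a BatchNorm module is inserted between the preceding nonlinearity and $z^{l+1}$. Thus the real content of the claim lies in evaluating the three $H$-quantities under BatchNorm, exactly as in the fully-connected calculation.

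\textbf{Evaluating $H_l$.} For each location $\alpha$ and channel $i$, the normalized variable
\[
u_\alpha^l(x)[i] \;:=\; \frac{z_\alpha^l(x)[i] - \mu_\alpha^l[i]}{s_\alpha^l[i]}
\]
has, by construction, data-mean zero and data-variance one. Following the same mean-field heuristic used in the fully-connected derivation and in \cite{xiao2018dynamical,amari}, I would approximate $u_\alpha^l(x)[i]$ by a standard Gaussian in $x$ (for typical weights, to leading order as $C_l\to\infty$). Then $h_\alpha^l(x)[i]=\sigma(\gamma_l u_\alpha^l(x)[i])$ and a direct ReLU Gaussian integral gives
\[
H_l \;\simeq\; \int Dz\,\sigma(\gamma_l z)^2 \;=\; \frac{\gamma_l^2}{2}.
\]

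\textbf{Evaluating $\widetilde{H}_l$ and $\widehat{H}_l$.} The inner products defining these two order parameters involve pairs $(x,x')$ and possibly distinct locations $(\alpha,\beta)$. The decisive observation, exactly parallel to the fully-connected proof, is that BatchNorm forces $\underset{x}{\mathbb{E}}\,u_\alpha^l(x)[i]=0$ for every $(\alpha,i)$, so by independence of $x$ and $x'$,
\[
\underset{x,x'}{\mathbb{E}}\bigl[u_\alpha^l(x)[i]\,u_\beta^l(x')[i]\bigr] \;=\; 0
\]
for all $\alpha,\beta$. The bivariate Gaussian governing the mean-field integral therefore has vanishing off-diagonal covariance, the integral factorizes, and for ReLU one obtains
\[
\widetilde{H}_l \;=\; \widehat{H}_l \;\simeq\; \left(\int Dz\,\sigma(\gamma_l z)\right)^2 \;=\; \frac{\gamma_l^2}{2\pi},
\]
which is exactly the value claimed. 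Substituting these three values into the general $\Gamma$-identities yields the stated recursions.

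\textbf{Main obstacle.} Once the distributional approximation is granted, the Gaussian integrals are entirely routine. The delicate point is justifying the mean-field claim itself for convolutional layers: that $u_\alpha^l(x)[i]$ is asymptotically standard Gaussian jointly in $(\theta,x)$, and that for $\alpha\neq\beta$ the pair $(u_\alpha^l(x)[i],u_\beta^l(x')[i])$ is asymptotically jointly Gaussian so that zero covariance implies the factorization used above. Both properties are inherited from the usual $C_l\gg 1$ concentration of convolutional pre-activations around their Gaussian surrogate, combined with the cross-input decorrelation forced by the subtraction of $\mu_\alpha^l$; a fully rigorous treatment would demand quantitative Gaussianity estimates, but within the mean-field regime adopted in the paper these reductions are standard.
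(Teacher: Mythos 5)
Your proposal is correct and follows essentially the same route as the paper: the $\Gamma$-recursions are read off from the general convolutional identities derived earlier in the subsection, and $H_l$, $\widetilde{H}_l$, $\widehat{H}_l$ are evaluated by treating the normalized pre-activation $u_\alpha^l(x)[i]$ as standard Gaussian and using the BatchNorm-induced vanishing of the cross-input covariance to factorize the two-point integrals into $\bigl(\int Dz\,\sigma(\gamma_l z)\bigr)^2=\gamma_l^2/(2\pi)$. The only difference is that you spell out the zero-covariance argument explicitly in the convolutional setting, whereas the paper derives it only in the fully-connected case and reuses it implicitly here.
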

\begin{proof}
Recalling that $h_\alpha^l(x) = \sigma_l\big(u_\alpha^l(x) \odot \gamma_l\big)$ where $u^{l}_{\alpha}(x) := \frac{z^{l}_{\alpha}(x) - \mu^{l}_{\alpha}}{s^{l}_{\alpha}}$ and substituting into \eqref{e:cnn_forward1}, \eqref{e:cnn_forward2} and \eqref{e:cnn_forward3} we obtain,
\begin{align}
    H_{l}
        & \simeq \int Dz \, \sigma_l^2(\gamma_l z) \enspace , \\
        & = \frac{\gamma_l^2}{2} \enspace , \\
    \widetilde{H}_{l}
        & \simeq \int Dz_1 \, Dz_2 \, \sigma_l (\gamma_l z_1) \sigma_l(\gamma_1 z_2) \enspace , \\
        & = \frac{\gamma_l^2}{2\pi} \enspace , \\
    \widehat{H}_{l}
        & \simeq \int Dz_1 \, Dz_2 \, \sigma_l (\gamma_l z_1) \sigma_l(\gamma_1 z_2) \enspace , \\
        & = \frac{\gamma_l^2}{2\pi} \enspace ,
\end{align}

\end{proof}
\begin{claim}
The backward recursions are $\Delta_l = \frac{\sigma_{\rm w}^2 \gamma_l^2 \Delta_{l+1}}{2\Gamma_{l}}$, $\widetilde{\Delta}_l = \frac{\sigma_{\rm w}^2 \gamma_l^2 \widetilde{\Delta}_{l+1}}{4\Gamma_{l}}$, and $\widehat{\Delta}_{l} = \frac{\sigma_{\rm w}^2 \gamma_l^2 \widehat{\Delta}_{l+1}}{4\Gamma_{l}}$.
\end{claim}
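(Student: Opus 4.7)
The plan is to adapt the FC backward derivation of the previous subsection to convolutional layers, tracking the extra spatial index. First I would use the chain rule to express
\begin{equation*}
\delta^l_\alpha(x)[i] = \frac{\gamma_l[i]}{s^l_\alpha[i]}\, \sigma'_l\!\big(u^l_\alpha(x)[i]\, \gamma_l[i]\big) \sum_{\beta \in \mathcal{F}_{l+1}} \sum_{k \in [C_{l+1}]} [W^{l+1}_\beta]_{ki}\, \delta^{l+1}_{\alpha - \beta}(x)[k] \enspace ,
\end{equation*}
which follows because $z^l_\alpha[i]$ enters $z^{l+1}_{\alpha'}[k]$ precisely when $\alpha' = \alpha - \beta$ for some $\beta \in \mathcal{F}_{l+1}$, and the BatchNorm recursion gives $\partial h^l_\alpha[i]/\partial z^l_\alpha[i] = (\gamma_l[i]/s^l_\alpha[i])\, \sigma'_l(u^l_\alpha(x)[i]\gamma_l[i])$.

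Second, for each of the three target quantities I would form the relevant inner product $\langle \delta^l_\alpha(x), \delta^l_\gamma(x')\rangle$, take $\mathbb{E}_\theta$, and invoke the usual mean-field assumption that the backpropagated signal $\delta^{l+1}$ is independent of the layer-$(l+1)$ weights. Using $\mathbb{E}_\theta[W^{l+1}_{\beta_1}]_{k_1 i}[W^{l+1}_{\beta_2}]_{k_2 i} = (\sigma_{\rm w}^2/N_l)\delta_{\beta_1 \beta_2}\delta_{k_1 k_2}$, the double $(\beta,k)$-sum collapses to a single one of the form $\sum_{\beta\in \mathcal{F}_{l+1}} \langle \delta^{l+1}_{\alpha - \beta}(x), \delta^{l+1}_{\gamma - \beta}(x')\rangle$. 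Averaging over the outer spatial index(es) and invoking approximate translation invariance of the underlying expectation (the same move that produced $|\mathcal{F}_{l+1}|$-factors in the derivations of $\Gamma_{l+1}$, $\widetilde{\Gamma}_{l+1}$, and $\widehat{\Gamma}_{l+1}$) reduces this to $|\mathcal{F}_{l+1}|\Delta_{l+1}$, $|\mathcal{F}_{l+1}|\widetilde{\Delta}_{l+1}$, or $|\mathcal{F}_{l+1}|\widehat{\Delta}_{l+1}$, depending on whether we are in the same-$x$/same-$\alpha$, different-$x$/same-$\alpha$, or different-$x$/$\alpha\neq\gamma$ case.

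Third, I would approximate each component of $u^l_\alpha(x)$ as standard Gaussian, so that $\mathbb{E}[\gamma_l[i]^2/s^l_\alpha[i]^2]$ effectively becomes $\gamma_l^2/\Gamma_l$, and then evaluate the remaining Gaussian integrals of $\sigma'_l(\gamma_l z) = \mathbf{1}_{z>0}$. For $\Delta_l$ this is the single-variable integral $\int Dz\,\sigma'_l(\gamma_l z)^2 = 1/2$. For $\widetilde{\Delta}_l$ and $\widehat{\Delta}_l$ the two relevant Gaussian arguments decorrelate, either by the same zero-cross-covariance computation used in the forward pass to show $\Sigma^l_{xx'}=0$, or because $\alpha \neq \gamma$ together with independent data, giving $\int Dz_1 Dz_2\,\sigma'_l(\gamma_l z_1)\sigma'_l(\gamma_l z_2) = (1/2)^2 = 1/4$. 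Assembling the prefactor $(\sigma_{\rm w}^2/N_l)\cdot C_l \cdot |\mathcal{F}_{l+1}| = \sigma_{\rm w}^2$ (using $N_l = C_l|\mathcal{F}_{l+1}|$) with the $\gamma_l^2/\Gamma_l$, the $1/2$ or $1/4$ factor, and the incoming $\Delta_{l+1}$/$\widetilde{\Delta}_{l+1}$/$\widehat{\Delta}_{l+1}$ yields exactly the claimed $\sigma_{\rm w}^2\gamma_l^2\Delta_{l+1}/(2\Gamma_l)$, $\sigma_{\rm w}^2\gamma_l^2\widetilde{\Delta}_{l+1}/(4\Gamma_l)$, and $\sigma_{\rm w}^2\gamma_l^2\widehat{\Delta}_{l+1}/(4\Gamma_l)$.

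The main obstacle will be justifying the off-diagonal spatial step for $\widehat{\Delta}_l$, namely the identification $\sum_{\beta\in \mathcal{F}_{l+1}} \mathbb{E}_{\alpha\neq\gamma}\langle \delta^{l+1}_{\alpha-\beta}(x), \delta^{l+1}_{\gamma-\beta}(x')\rangle \simeq |\mathcal{F}_{l+1}|\widehat{\Delta}_{l+1}$. This mirrors the boundary-negligible/translation-invariance argument already invoked for $\widehat{\Gamma}_{l+1}$ in the forward recursion and requires checking that as $(\alpha,\gamma)$ ranges over off-diagonal pairs in $\mathcal{K}_l \times \mathcal{K}_l$ and $\beta$ ranges over $\mathcal{F}_{l+1}$, the pairs $(\alpha-\beta, \gamma-\beta)$ also cover essentially all off-diagonal pairs (which is valid because $\alpha\neq\gamma$ is preserved under a common shift, and boundary effects are $O(1/|\mathcal{K}_l|)$). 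Once this reduction is accepted on the same footing as the forward analogue, the rest of the derivation is a routine translation of the FC calculation to the indexed convolutional setting.
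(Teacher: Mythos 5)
Your proposal follows essentially the same route as the paper's derivation: the chain rule giving $\delta^l_\alpha[i]$ as a BatchNorm-scaled factor $\gamma_l[i]/s^l_\alpha[i]\cdot\sigma_l'(\cdot)$ times a weight-contracted sum of $\delta^{l+1}_{\alpha-\beta}$, collapse of the double $(\beta,k)$-sum via the Gaussian weight statistics, the translation-invariance/index-shift step producing the $|\mathcal{F}_{l+1}|$ factor (which the paper likewise treats as an explicit additional assumption for the $\alpha\neq\beta$ case), and the Gaussian integrals $1/2$ and $1/4$ combined with $\mathbb{E}[\gamma_l^2/(s^l)^2]\simeq\gamma_l^2/\Gamma_l$ and $N_l=C_l|\mathcal{F}_{l+1}|$. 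The steps, assumptions, and resulting constants all match the paper's argument.
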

In order to derive the backward recursion, define (for each $l \in [L]$, $i \in [C_l]$, $\alpha \in \mathcal{K}_l$)
\begin{equation}
    \delta^l_\alpha[i] = \frac{\partial h_\theta^L}{\partial z_\alpha^l[i]} \enspace .
\end{equation}


Then by the chain rule,
\begin{align}
    \delta_\alpha^l[i]
        & = \sum_{(\beta, k) \in \mathcal{K}_{l+1} \times [C_{l+1}]} \frac{\partial z_\beta^{l+1}[k]}{\partial z_\alpha^l[i]} \delta_\beta^{l+1}[k] \enspace .
\end{align}
Now,
\begin{align}
    z_\beta^{l+1}[k]
        & = \sum_{(\beta',j)\in \mathcal{F}_{l+1}\times [C_l]} [W^{l+1}_{\beta'}]_{kj}\sigma_l \big(u_{\beta+\beta'}^l[j] \, \gamma_l[j]\big) + b^{l+1}[k] \enspace , \\
    \frac{\partial z_\beta^{l+1}[k]}{\partial z_\alpha^l[i]}
        & = \sum_{(\beta',j)\in \mathcal{F}_{l+1}\times [C_l]} [W^{l+1}_{\beta'}]_{kj}\sigma_l'\big(u^{l}_{\beta + \beta'}[j] \, \gamma_l[j]\big)\frac{\gamma_l[j]}{s_{\beta + \beta'}^l[j]} \delta_{ij}\delta_{\alpha,\beta + \beta'} \enspace , \\
        & = [W^{l+1}_{\alpha - \beta}]_{ki} \sigma_l'\big(u_\alpha^l[i] \, \gamma_l[i]\big)\frac{\gamma_l[i]}{s_\alpha^l[i]} \enspace .
\end{align}
Thus,
\begin{align}
        \delta_\alpha^l[i]
        & = \frac{\gamma_l[i]}{s_\alpha^l[i]} \sigma_l'\left(u_\alpha^l[i] \, \gamma_l[i]\right) \sum_{(\beta,k)\in \mathcal{F}_{l+1} \times [C_{l+1}]} [W^{l+1}_{\beta}]_{ki} \delta_{\alpha-\beta}^{l+1}[k] \enspace .
\end{align}
By the distributional assumption on the weights, for each $(\alpha,\beta) \in \mathcal{K}_l \times \mathcal{K}_l$ we have,
\begin{equation}
    \underset{\theta}{\mathbb{E}}\sum_{\substack{(\beta_1, k_1) \in \mathcal{F}_{l+1} \times [C_{l+1}] \\ (\beta_2, k_2) \in \mathcal{F}_{l+1} \times [C_{l+1}]}} 
        [W^{l+1}_{\beta_1}]_{k_1i}
        [W^{l+1}_{\beta_2}]_{k_2i}
        \delta_{\alpha-\beta_1}^{l+1}[k_1]
        \delta_{\beta-\beta_2}^{l+1}[k_2]
            = \frac{\sigma_{\rm w}^2}{N_l}\sum_{\delta \in \mathcal{F}_{l+1}} \underset{\theta}{\mathbb{E}}\,\langle \delta_{\alpha-\delta}^{l+1}(x),\delta_{\beta-\delta}^{l+1}(x^\prime) \rangle \enspace .
\end{equation}

Thus, under the usual independence assumptions,



\begin{align}
    \underset{\theta}{\mathbb{E}} \, \langle
    \delta_\alpha^l(x), \delta_{\beta}^l(x^\prime)\rangle
        & =\frac{\sigma_{\rm w}^2}{N_l}\sum_{i \in [C_l]} \underset{\theta}{\mathbb{E}}\left[\frac{\gamma_l[i]^2}{s_\alpha^l[i]s_{\beta}^l[i]}
        \sigma_l'\left(u_\alpha^l(x)[i] \, \gamma_l[i]\right)
        \sigma_l'\left(u_{\beta}^l(x^\prime)[i] \, \gamma_l[i]\right)\right] \sum_{\delta \in \mathcal{F}_{l+1}} \underset{\theta}{\mathbb{E}}\,\langle \delta_{\alpha-\delta}^{l+1}(x),\delta_{\beta-\delta}^{l+1}(x^\prime)  \rangle \enspace .
\end{align}

Setting $\alpha = \beta$, $x=x^\prime$, averaging over $\alpha$ and taking the expectation value over $x$,
\begin{align}
    \Delta_l 
        & : = 
        \underset{\alpha}{\mathbb{E}}\,
        \underset{x,\theta}{\mathbb{E}}\,
        \Vert \delta_\alpha^l(x) \Vert^2 \enspace , \\
        & = 
        \frac{\sigma_{\rm w}^2}{N_l}\frac{\gamma_l^2}{2\Gamma_{l}}
        C_l \,
        \underset{x,\theta}{\mathbb{E}}
        \sum_{\delta \in \mathcal{F}_{l+1}}
        \underset{\alpha}{\mathbb{E}} \, \Vert \delta_{\alpha-\delta}^l(x)  \Vert^2 \enspace , \\
        & = 
        \frac{\sigma_{\rm w}^2}{N_l}\frac{\gamma_l^2}{2\Gamma_{l}}
        C_l
        |\mathcal{F}_{l+1}| \,
        \underset{\alpha}{\mathbb{E}} \,
        \underset{x,\theta}{\mathbb{E}} \,
        \Vert \delta_{\alpha}^l(x)  \Vert^2 \enspace , \\
        & = \frac{\sigma_{\rm w}^2 \gamma_l^2 \Delta_{l+1}}{2\Gamma_{l}} \enspace .
\end{align}
Similarly, setting $\alpha = \beta$, averaging over  $\alpha$ and taking expectation values over $x,x^\prime$ gives
\begin{align}
    \widetilde{\Delta}_l 
        & : = \underset{\alpha}{\mathbb{E}}\,\underset{x,x^\prime,\theta}{\mathbb{E}}\langle \delta_\alpha^l(x), \delta_\alpha^l(x^\prime) \rangle \enspace , \\
        & = \frac{\sigma_{\rm w}^2 \gamma_l^2 \Delta_{l+1}}{4\Gamma_{l}} \enspace .
\end{align}

Now,
\begin{align}
    & \sum_{\substack{\alpha,\beta \in \mathcal{K}_{l+1} \times \mathcal{K}_{l+1} : \alpha \neq \beta \\ \delta \in \mathcal{F}_{l+1}}} \langle \delta_{\alpha-\delta}^{l+1}(x),\delta_{\beta-\delta}^{l+1}(x^\prime) \rangle \\
        & = \sum_{\delta \in \mathcal{F}_{l+1}}\left[ \left\langle\sum_{\alpha\in\mathcal{K}_{l+1}}\delta_{\alpha-\delta}^{l+1}(x),\sum_{\beta\in\mathcal{K}_{l+1}}\delta_{\beta-\delta}^{l+1}(x^\prime) \right\rangle - \sum_{\alpha \in \mathcal{K}_{l+1}} \langle \delta_{\alpha-\delta}^{l+1}(x), \delta_{\alpha-\delta}^{l+1}(x^\prime) \rangle\right] \enspace , \\
        & = |\mathcal{F}_{l+1}|\left[ \left\langle\sum_{\alpha \in \mathcal{K}_{l+1}}\delta_{ \alpha}^{l+1}(x),\sum_{\beta\in\mathcal{K}_l}\delta_{\beta}^{l+1}(x^\prime) \right\rangle - \sum_{\alpha \in \mathcal{K}_{l+1}} \langle \delta_{\alpha}^{l+1}(x), \delta_{\alpha}^{l+1}(x^\prime) \rangle  \right] \enspace , \\
        & = |\mathcal{F}_{l+1}|\sum_{\alpha,\beta \in \mathcal{K}_{l+1} \times \mathcal{K}_{l+1} : \alpha \neq \beta} \left\langle\delta_{ \alpha}^{l+1}(x),\delta_{\beta}^{l+1}(x^\prime) \right\rangle \enspace .
\end{align}
Let us further assume that
\begin{align}
    &\frac{1}{|\mathcal{K}_{l}|(|\mathcal{K}_{l}| -1)}\sum_{\alpha,\beta\in\mathcal{K}_{l} \times \mathcal{K}_{l}:\alpha \neq \beta} \underset{x,x^\prime,\theta}{\mathbb{E}}\left\langle\delta_{ \alpha}^{l+1}(x),\delta_{\beta}^{l+1}(x^\prime) \right\rangle \\
    &= \frac{1}{|\mathcal{K}_{l+1}|(|\mathcal{K}_{l+1}| -1)}\sum_{\alpha,\beta\in\mathcal{K}_{l+1} \times \mathcal{K}_{l+1}:\alpha \neq \beta} \underset{x,x^\prime,\theta}{\mathbb{E}}\left\langle\delta_{ \alpha}^{l+1}(x),\delta_{\beta}^{l+1}(x^\prime) \right\rangle \enspace .
\end{align}
Thus, taking expectation values over $(x,x^\prime)$ and averaging over the allowable indices such that $\alpha\neq\beta$ we obtain,
\begin{align}
    \underset{\alpha \neq \beta}{\mathbb{E}}
    \left[\underset{x,x^\prime,\theta}{\mathbb{E}}
    \langle \delta_\alpha^l(x), \delta_\beta^l(x^\prime) \rangle\right]
    =
    \frac{C_l|\mathcal{F}_{l+1}|}{N_l} \frac{\sigma_{\rm w}^2\gamma_l^2}{4 \Gamma_{l}}  \underset{\alpha \neq \beta}{\mathbb{E}}\left[ \underset{x,x^\prime,\theta}{\mathbb{E}}\langle \delta_\alpha^{l+1}(x), \delta_\beta^{l+1}(x^\prime) \rangle\right] \enspace .
\end{align}
It follows that
\begin{align}
    \widehat{\Delta}_{l}
        & := \underset{\alpha\neq \beta}{\mathbb{E}}\left[ \underset{x,x^\prime,\theta}{\mathbb{E}}\left\langle \delta_\alpha^l(x), \delta_\beta^l(x^\prime) \right\rangle \right] \enspace , \\
        & = \frac{\sigma_{\rm w}^2 \gamma_l^2 \widehat{\Delta}_{l+1} }{4\Gamma_{l}} \enspace .
\end{align}

\subsubsection{Vanilla CNN}
\begin{claim}
The forward recursions are $\Gamma_{l+1} = \sigma_{\rm b}^2 + \sigma_{\rm w}^2 H_{l}$, $\widetilde{\Gamma}_{l+1} = \sigma_{\rm b}^2 + \sigma_{\rm w}^2\widetilde{H}_{l}$ and $\widehat{\Gamma}_{l+1} = \sigma_{\rm b}^2 + \sigma_{\rm w}^2 \widehat{H}_{l}$ where
\begin{align}
    H_{l}
        & = \frac{1}{2}\Gamma_{l}\enspace , \\
    \widetilde{H}_{l}
        & = \frac{\Gamma_{l}}{2\pi}\left[\sqrt{1-\tilde{c}^2}+\frac{\tilde{c}\pi}{2}+\tilde{c}\sin^{-1}(\tilde{c})\right]\enspace , \\
    \widehat{H}_{l}
        & = \frac{\Gamma_{l}}{2\pi}\left[\sqrt{1-\hat{c}^2}+\frac{\hat{c}\pi}{2}+\hat{c}\sin^{-1}(\hat{c})\right]\enspace ,
\end{align}
and $\tilde{c} := \widetilde{\Gamma}_{l}/\Gamma_{l}$, $\hat{c} := \widehat{\Gamma}_{l}/\Gamma_{l}$.
\end{claim}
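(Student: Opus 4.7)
The plan is to reuse the architecture-agnostic identities already derived for a general convolutional layer, namely equations \eqref{e:cnn_forward1}, \eqref{e:cnn_forward2}, and \eqref{e:cnn_forward3}, which give $\Gamma_{l+1} = \sigma_{\rm b}^2 + \sigma_{\rm w}^2 H_l$, $\widetilde{\Gamma}_{l+1} = \sigma_{\rm b}^2 + \sigma_{\rm w}^2 \widetilde{H}_l$, and $\widehat{\Gamma}_{l+1} = \sigma_{\rm b}^2 + \sigma_{\rm w}^2 \widehat{H}_l$. Those identities used only the Gaussianity of $W^{l+1}$ and $b^{l+1}$, not the form of the nonlinearity, so they carry over to the vanilla case verbatim. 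What remains is to evaluate $H_l$, $\widetilde{H}_l$, $\widehat{H}_l$ under the substitution $h^l_\alpha(x) = \sigma(z^l_\alpha(x))$ with $\sigma$ the ReLU.

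The second step is to justify (via the usual mean-field/central-limit heuristic on the preceding affine map) that the relevant low-dimensional marginals of $z^l_\alpha(x)$ are centered Gaussians. Concretely, a single channel component $z^l_\alpha(x)[i]$ has variance $\Gamma_l$; the pair $(z^l_\alpha(x)[i], z^l_\alpha(x')[i])$ at the same spatial location across two inputs is jointly centered Gaussian with variances $\Gamma_l$ and covariance $\widetilde{\Gamma}_l$, hence correlation $\tilde{c} = \widetilde{\Gamma}_l/\Gamma_l$; and $(z^l_\alpha(x)[i], z^l_\beta(x')[i])$ at distinct spatial locations $\alpha\neq\beta$ and (possibly) distinct inputs is jointly centered Gaussian with variances $\Gamma_l$ and covariance $\widehat{\Gamma}_l$, hence correlation $\hat{c} = \widehat{\Gamma}_l/\Gamma_l$.

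The third step is the Gaussian integration. For $H_l$ a single-variable integral of $\sigma(z)^2 = \max(z,0)^2$ against $N(0,\Gamma_l)$ immediately gives $\Gamma_l/2$. For $\widetilde{H}_l$ and $\widehat{H}_l$ I would invoke the first-order arc-cosine kernel for ReLU: if $(Z_1, Z_2)$ is centered Gaussian with unit variances and correlation $\rho$, then $\mathbb{E}[\sigma(Z_1)\sigma(Z_2)] = \frac{1}{2\pi}\bigl[\sqrt{1-\rho^2} + \rho(\pi/2 + \sin^{-1}\rho)\bigr]$, and scale by $\Gamma_l$. Setting $\rho = \tilde{c}$ and $\rho = \hat{c}$ respectively produces the two stated closed-form expressions.

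The main obstacle here is not the Gaussian integration (which is a textbook arc-cosine kernel) but rather the justification of the bivariate Gaussian approximation for $(z^l_\alpha(x), z^l_\beta(x'))$ at distinct spatial locations $\alpha\neq\beta$, together with the identification of the correct covariance as $\widehat{\Gamma}_l$. This requires unwinding the convolutional sum $z^{l+1}_\alpha = \sum_{\beta\in\mathcal{F}_{l+1}} W^{l+1}_\beta h^l_{\alpha+\beta} + b^{l+1}$ and appealing to the sum-over-inputs CLT in the limit $C_l\gg 1$, combined with the uniform-over-spatial-indices assumption already invoked in the preceding derivations for the convolutional order parameters. Once this approximation is granted, the three claimed formulas follow by routine evaluation of the arc-cosine kernel at the two distinct correlation values.
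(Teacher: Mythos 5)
Your proposal is correct and follows essentially the same route as the paper: the paper likewise substitutes $h^l_\alpha(x)=\sigma(z^l_\alpha(x))$ into the general identities \eqref{e:cnn_forward1}--\eqref{e:cnn_forward3}, treats the relevant pairs of pre-activations as centered bivariate Gaussians with variance $\Gamma_l$ and correlations $\tilde{c}$, $\hat{c}$, and evaluates the resulting single and double Gaussian integrals (your ``arc-cosine kernel'' is exactly the paper's $\int Dz_1\,Dz_2$ computation, just named). Your extra remarks on justifying the Gaussian approximation for $\alpha\neq\beta$ make explicit what the paper leaves implicit, but do not change the argument.
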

\begin{proof}
Substituting into \eqref{e:cnn_forward1}, \eqref{e:cnn_forward2} and \eqref{e:cnn_forward3} we obtain,
\begin{align}
    H_{l}
        & := \int Dz \, \sigma_l^2(\sqrt{\Gamma_{l}} z) \enspace , \\
        & = \frac{\Gamma_{l}}{2} \enspace , \\
    \widetilde{H}_{l}
        & = \int Dz_1 \, Dz_2 \, \sigma_{l}\big(\sqrt{\Gamma_{l}} \, z_1 \gamma_{l}\big) \sigma_{l}\left[\sqrt{\Gamma_{l}} \left( \tilde{c}_l \,z_1+\sqrt{1-(\tilde{c}_l)^2} \,z_2 \right)\gamma_{l}\right] \enspace , \\
        & = \frac{\Gamma_{l}}{2\pi}\left[\sqrt{1-\tilde{c}^2}+\frac{\tilde{c}\pi}{2}+\tilde{c}\sin^{-1}(\tilde{c})\right] \enspace , \\
    \widehat{H}_{l}
        & = \int Dz_1 \, Dz_2 \, \sigma_{l}\big(\sqrt{\Gamma_{l}} \, z_1 \gamma_{l}\big) \sigma_{l}\left[\sqrt{\Gamma_{l}} \left( \hat{c}_l \,z_1+\sqrt{1-(\hat{c}_l)^2} \,z_2 \right)\gamma_{l}\right] \enspace , \\
        & = \frac{\Gamma_{l}}{2\pi}\left[\sqrt{1-\hat{c}^2}+\frac{\hat{c}\pi}{2}+\hat{c}\sin^{-1}(\hat{c})\right] \enspace .
\end{align}
\end{proof}
\begin{claim}
The backward recursions are $\Delta_l = \frac{\sigma_{\rm w}^2 \Delta_{l+1}}{2}$, $\widetilde{\Delta}_l = \frac{\sigma_{\rm w}^2 \widetilde{\Delta}_{l+1}}{2\pi}\left[\frac{\pi}{2} + \sin^{-1}(\tilde{c})\right]$ and $\widehat{\Delta}_{l} = \frac{\sigma_{\rm w}^2 \widehat{\Delta}_{l+1}}{2\pi}\left[\frac{\pi}{2} + \sin^{-1}(\hat{c})\right]$.
\end{claim}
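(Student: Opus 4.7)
The plan is to adapt the batch-normalized backward-recursion derivation immediately preceding this claim, specializing to $h_\alpha^l(x)=\sigma(z_\alpha^l(x))$ so that the prefactor $\gamma_l/s_\alpha^l$ disappears. The chain rule then reads
\begin{equation*}
    \delta_\alpha^l[i] = \sigma'\!\bigl(z_\alpha^l[i]\bigr) \sum_{(\beta,k)\in\mathcal{F}_{l+1}\times[C_{l+1}]} [W^{l+1}_\beta]_{ki}\,\delta_{\alpha-\beta}^{l+1}[k] \enspace .
\end{equation*}
Multiplying two such expressions, averaging over the Gaussian weights with the same second-moment identity $\mathbb{E}[W_{\beta_1}^{l+1}[k_1 i]W_{\beta_2}^{l+1}[k_2 i]]=(\sigma_w^2/N_l)\delta_{\beta_1\beta_2}\delta_{k_1 k_2}$, and invoking the standard mean-field assumption that the back-propagated signal is independent of the forward pass, I would obtain
\begin{equation*}
    \underset{\theta}{\mathbb{E}}\,\langle \delta_\alpha^l(x),\delta_\beta^l(x') \rangle = \frac{\sigma_w^2}{N_l}\sum_{i\in[C_l]} \underset{\theta}{\mathbb{E}}\bigl[\sigma'(z_\alpha^l(x)[i])\sigma'(z_\beta^l(x')[i])\bigr] \sum_{\delta\in\mathcal{F}_{l+1}}\underset{\theta}{\mathbb{E}}\,\langle \delta_{\alpha-\delta}^{l+1}(x),\delta_{\beta-\delta}^{l+1}(x') \rangle \enspace .
\end{equation*}

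Next I would evaluate the $\sigma'$ integrals. In the mean-field approximation the pre-activation $z_\alpha^l(x)[i]$ is a centered Gaussian with variance $\Gamma_l$, and the relevant pair $(z_\alpha^l(x)[i],z_\beta^l(x')[i])$ is jointly centered Gaussian with correlation $1$, $\tilde c$, or $\hat c$ according to whether $(\alpha,x)=(\beta,x')$, $\alpha=\beta$ with $x\ne x'$, or $\alpha\ne\beta$. Since $\sigma$ is ReLU, $\sigma'$ is the Heaviside step, so by Sheppard's formula
\begin{equation*}
    \int Dz\,\sigma'(z)^2 = \tfrac12, \qquad \int Dz_1\,Dz_2\,\sigma'(z_1)\sigma'\!\bigl(c z_1+\sqrt{1-c^2}\,z_2\bigr) = \frac{1}{2\pi}\!\left[\frac{\pi}{2}+\sin^{-1}c\right]
\end{equation*}
for $c\in\{\tilde c,\hat c\}$; the common variance $\Gamma_l$ cancels because the ReLU derivative is scale-invariant.

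To finish, I would substitute these integrals and use $N_l=C_l|\mathcal{F}_{l+1}|$ to collapse the residual prefactors. Setting $\alpha=\beta$, $x=x'$ and averaging over $\alpha$ turns $\sum_{\delta\in\mathcal{F}_{l+1}}\mathbb{E}_\alpha\|\delta_{\alpha-\delta}^{l+1}(x)\|^2$ into $|\mathcal{F}_{l+1}|\,\Delta_{l+1}$ by translation invariance of the $\alpha$-average, giving $\Delta_l=\sigma_w^2\Delta_{l+1}/2$. The same manipulation with expectation over $(x,x')$ yields $\widetilde\Delta_l=\sigma_w^2\widetilde\Delta_{l+1}[\pi/2+\sin^{-1}\tilde c]/(2\pi)$. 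For $\widehat\Delta_l$ I would reuse verbatim the index-manipulation identity from the BN case that converts $\sum_{\alpha\ne\beta}\sum_{\delta}\langle\delta_{\alpha-\delta}^{l+1}(x),\delta_{\beta-\delta}^{l+1}(x')\rangle$ into $|\mathcal{F}_{l+1}|$ times the off-diagonal sum over $\mathcal{K}_{l+1}\times\mathcal{K}_{l+1}$, together with the approximate $\mathcal{K}$-invariance assumption already invoked there, to obtain $\widehat\Delta_l=\sigma_w^2\widehat\Delta_{l+1}[\pi/2+\sin^{-1}\hat c]/(2\pi)$. The main obstacle is only bookkeeping: every nontrivial tool (weight second moment, forward/backward independence heuristic, Sheppard arcsin formula, the $|\mathcal{F}_{l+1}|$-collapse lemma, and the $\mathcal{K}$-invariance assumption) has already been deployed in the preceding derivations, so the task is just to confirm that each step transfers intact once the $\gamma_l/s_\alpha^l$ BN factor is dropped.
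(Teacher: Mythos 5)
Your proposal is correct and follows essentially the same route as the paper's own derivation: chain rule for $\delta_\alpha^l$, Gaussian weight second moments with the forward/backward independence heuristic, the arcsine orthant integrals for the Heaviside derivative of ReLU (with the variance cancelling by scale invariance), and the same $|\mathcal{F}_{l+1}|$-collapse and $\mathcal{K}$-invariance bookkeeping carried over from the batch-normalized case. The paper's proof is in fact terser than yours — it states the factorized covariance identity and then asserts the three results directly — so your write-up simply makes explicit the integrals and index manipulations the paper leaves implicit.
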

\begin{proof}
In the backward direction, we have
\begin{align}
    \delta_\alpha^l[i]
        & = \sum_{(\beta, k) \in \mathcal{K}_{l+1} \times [C_{l+1}]} \frac{\partial z_\beta^{l+1}[k]}{\partial z_\alpha^l[i]} \delta_\beta^{l+1}[k] \enspace , \\
        & = \sigma_l'\left(z_\alpha^l[i]\right) \sum_{(\beta,k)\in \mathcal{F}_{l+1} \times [C_{l+1}]} [W^{l+1}_{\beta}]_{ki} \delta_{\alpha-\beta}^{l+1}[k] 
\end{align}
Under the usual independence assumptions,
\begin{align}
    \underset{\theta}{\mathbb{E}} \, \langle
    \delta_\alpha^l(x), \delta_{\beta}^l(x^\prime)\rangle
        & =\frac{\sigma_{\rm w}^2}{N_l} \underset{\theta}{\mathbb{E}}\, \left\langle
        \sigma_l'\left(z_\alpha^l(x) \right)
        ,
        \sigma_l'\left(z_{\beta}^l(x^\prime)\right)\right\rangle
        \sum_{\delta \in \mathcal{F}_{l+1}} \langle \delta_{\alpha-\delta}^{l+1}(x),\delta_{\beta-\delta}^{l+1}(x^\prime)  \rangle \enspace .
\end{align}
Thus,
\begin{align}
    \Delta_l 
        & : = \underset{\alpha}{\mathbb{E}}\,\underset{x,\theta}{\mathbb{E}} \Vert \delta_\alpha^l(x) \Vert^2 \enspace , \\
        & = \frac{\sigma_{\rm w}^2  \Delta_{l+1}}{2} \enspace \\
    \widetilde{\Delta}_l 
        & : = \underset{\alpha}{\mathbb{E}}\,\underset{x,x^\prime,\theta}{\mathbb{E}}\langle \delta_\alpha^l(x), \delta_\alpha^l(x^\prime) \rangle \enspace , \\
        & = \frac{\sigma_{\rm w}^2  \widetilde{\Delta}_{l+1}}{2\pi}\left(\frac{\pi}{2}+\sin^{-1}\tilde{c}\right) \enspace \\
    \widehat{\Delta}_{l}
        & := \underset{\alpha\neq \beta}{\mathbb{E}}\left[ \underset{x,x^\prime,\theta}{\mathbb{E}}\left\langle \delta_\alpha^l(x), \delta_\beta^l(x^\prime) \right\rangle \right] \enspace , \\
        & = \frac{\sigma_{\rm w}^2 \widehat{\Delta}_{l+1} }{2\pi}\left(\frac{\pi}{2}+\sin^{-1}\hat{c}\right) \enspace .
\end{align}
\end{proof}

\subsection{Derivation of Claim \ref{theorem}}

Recall that the Fisher information  matrix $I_\theta \in \mathbb{R}^{n \times n}$ is given by
\begin{align}
	I_\theta 
		& = \underset{x \sim \mathcal{D}}{\mathbb{E}} \big[ \nabla_\theta f_\theta(x) \otimes \nabla_\theta f_\theta(x)\big] \enspace .
\end{align}
The claim is that the maximum eigenvalue of the Fisher Information Matrix is bounded as follows
\begin{equation}
	\mathbb{E}_{(x,x^\prime) \sim \mathcal{D}} \langle \nabla_\theta f_\theta (x), \nabla_\theta f_\theta (x^\prime) \rangle \leq \lambda_{\max}(I_\theta) \leq \mathbb{E}_{x \sim \mathcal{D}} \langle \nabla_\theta f_\theta (x) , \nabla_\theta f_\theta(x) \rangle \enspace .
\end{equation}
The upper bound follows from convexity of $\lambda_{\max}(\cdot)$. \cite{amari} obtain the lower bound by
considering the empirical estimate of the Fisher Information Matrix,
\begin{align}
	\widehat{I}_\theta
		& = \frac{1}{m} \sum_{i=1}^m \nabla_\theta f_\theta(x_i) \otimes \nabla_\theta f_\theta(x_i) \enspace , \\
		& = \frac{1}{m} B_m^\top B_m \enspace .
\end{align}
where we have defined the matrix $B_m \in \mathbb{R}^{m \times n}$ with components $[B_m]_{ij} := \frac{\partial f_\theta(x_i)}{\partial \theta_j}$.
We can then define a symmetric matrix $\widehat{H}_\theta \in \mathbb{R}^{m \times m}$ with the same eigenvalues as $\widehat{I}_\theta$,
\begin{equation}
	\widehat{H}_\theta = \frac{1}{m} B_m B_m^\top \enspace,
\end{equation}
The maximal eigenvalue of $\widehat{I}_\theta$ is thus computed by the Rayleigh quotient,
\begin{equation}
	\lambda_{\max}(\widehat{I}_\theta) = \lambda_{\max}(\widehat{H}_\theta) = \max_{v : \Vert v \Vert = 1} \langle v, \widehat{H}_\theta \, v \rangle \enspace .
\end{equation}
Letting $\mathbf{1} \in \mathbb{R}^m$ denote the vector of ones,
\begin{align}
	\lambda_{\rm max}(\widehat{I}_\theta)
		& \geq \left\langle \frac{1}{\sqrt{m}} \mathbf{1} , \widehat{H}_\theta \, \frac{1}{\sqrt{m}} \mathbf{1} \right\rangle \enspace , \\
		& = \frac{1}{m^2} \sum_{i,j \in [m]} \big\langle \nabla_\theta f_\theta(x_i) , \nabla_\theta f_\theta(x_j) \big\rangle ,
\end{align}
which is the the plug-in estimator (V-statistic).
Note that this bound can also be obtained using the population form of the Fisher information matrix using Jensen's inequality,
\begin{align}
    \lambda_{\max}(I_\theta) 
        & = \max_{v: \Vert v \Vert = 1} \left\langle v, I_\theta v \right\rangle \\
        & = \max_{v: \Vert v \Vert = 1} \underset{x\sim\mathcal{D}}{\mathbb{E}}\left\langle v, \nabla_\theta f_\theta(x)  \right\rangle^2 \\
        & \geq \max_{v: \Vert v \Vert = 1} \left\langle v, \underset{x\sim\mathcal{D}}{\mathbb{E}}\nabla_\theta f_\theta(x)  \right\rangle^2 \\
        & = \left\Vert\underset{x\sim\mathcal{D}}{\mathbb{E}} \nabla_\theta f_\theta(x)\right\Vert^2
\end{align}
Thus, for a layered neural network we have
\begin{align}
	\bar{\lambda}_{\rm max}(\widehat{I}_\theta)
		& \geq \frac{1}{m^2} \sum_{(x,x^\prime)} \sum_{l \in [L]} \underset{\theta}{\mathbb{E}}\big\langle \nabla_{\theta_l} f_\theta(x) , \nabla_{\theta_l} f_\theta(x^\prime) \big\rangle \enspace ,  \\
		& =: \sum_{l \in [L]} f_l . 
\end{align}
Specializing to a fully-connected neural network we obtain,
\begin{align}
		f_l 
		    & = \frac{1}{m^2} \sum_{(x,x^\prime)}\underset{\theta}{\mathbb{E}} \left[\left\langle \frac{\partial f_\theta(x)}{\partial b_l} , \frac{\partial f_\theta(x^\prime)}{\partial b_l} \right\rangle + \left\langle \frac{\partial f_\theta(x)}{\partial W_l} , \frac{\partial f_\theta(x^\prime)}{\partial W_l} \right\rangle\right.  \\
		    & \quad \left. + \left\langle \frac{\partial f_\theta(x)}{\partial \gamma_l}, \frac{\partial f_\theta(x^\prime)}{\partial \gamma_l} \right\rangle + \left\langle \frac{\partial f_\theta(x)}{\partial \beta_l}, \frac{\partial f_\theta(x^\prime)}{\partial \beta_l} \right\rangle \right] \enspace , \\
		    & \simeq \frac{1}{m^2} \sum_{(x,x^\prime)}\underset{\theta}{\mathbb{E}} \left\langle \frac{\partial f_\theta(x)}{\partial W_l} , \frac{\partial f_\theta(x^\prime)}{\partial W_l} \right\rangle \enspace , \\
		    & = \frac{1}{m^2} \sum_{(x,x^\prime)}\underset{\theta}{\mathbb{E}}\big\langle h^{l-1}(x), h^{l-1}(x^\prime) \big\rangle \big\langle \delta^l (x), \delta^l(x^\prime) \big\rangle \enspace ,\\
		    & \simeq \left[\frac{1}{m^2} \sum_{(x,x^\prime)}\underset{\theta}{\mathbb{E}}\big\langle h^{l-1}(x), h^{l-1}(x^\prime) \big\rangle \right]\left[\frac{1}{m^2} \sum_{(x,x^\prime)}\underset{\theta}{\mathbb{E}}\big\langle \delta^l (x), \delta^l(x^\prime) \big\rangle \right] \enspace , \\
		    & = N_{l-1}\widetilde{H}_{l-1}\widetilde{\Delta}_{l}\enspace.
\end{align}
In the first approximation we use the fact that the terms with respect to $b, \gamma, \beta$ are $N_l$ times smaller than the term with respect to $W$. The second approximation uses the assumption that forward and backward order parameters are independent. The last approximation is for $m \gg 1$. 

For convolutional layers we have
\begin{align}
		f_l 
		    & = \frac{1}{m^2} \sum_{(x,x^\prime)}\underset{\theta}{\mathbb{E}} \left[\left\langle \frac{\partial f_\theta(x)}{\partial b_l} , \frac{\partial f_\theta(x^\prime)}{\partial b_l} \right\rangle + \left\langle \frac{\partial f_\theta(x)}{\partial W_l} , \frac{\partial f_\theta(x^\prime)}{\partial W_l} \right\rangle\right. \\
		    & \enspace\enspace\enspace \left. + \left\langle \frac{\partial f_\theta(x)}{\partial \gamma_l}, \frac{\partial f_\theta(x^\prime)}{\partial \gamma_l} \right\rangle + \left\langle \frac{\partial f_\theta(x)}{\partial \beta_l}, \frac{\partial f_\theta(x^\prime)}{\partial \beta_l} \right\rangle \right] \enspace , \\
		    & \simeq \frac{1}{m^2} \sum_{(x,x^\prime)}\underset{\theta}{\mathbb{E}} \left\langle \frac{\partial f_\theta(x)}{\partial W_l} , \frac{\partial f_\theta(x^\prime)}{\partial W_l} \right\rangle \enspace , \\
		    & = \frac{1}{m^2} \sum_{(x,x^\prime)}\sum_{\alpha \in \mathcal{F}_l}\sum_{\beta,\beta' \in \mathcal{K}_l}\underset{\theta}{\mathbb{E}}
		    \langle \delta^l_{\beta}(x),\delta^l_{\beta'}(x^\prime)\rangle \langle h^{l-1}_{\alpha+\beta}(x), h^{l-1}_{\alpha+\beta'}(x^\prime)\rangle \enspace ,\\
		    & \simeq \frac{1}{|\mathcal{K}_l|^2} \left[\frac{1}{m^2} \sum_{(x,x^\prime)}\sum_{\beta,\beta^\prime \in \mathcal{K}_l}\underset{\theta}{\mathbb{E}} \langle \delta^l_{\beta}(x), \delta^l_{\beta'}(x^\prime)\rangle\right]\left[\frac{1}{m^2} \sum_{(x,x^\prime)} \sum_{\alpha \in \mathcal{F}_l} \sum_{\beta,\beta' \in \mathcal{K}_l} \underset{\theta}{\mathbb{E}}\langle h^{l-1}_{\alpha+\beta}(x), h^{l-1}_{\alpha+\beta'}(x^\prime)\rangle  \right] \enspace , \\
		    & = \left[|\mathcal{K}_l|(|\mathcal{K}_l|-1)\widehat{\Delta}_{l} + |\mathcal{K}_l|\widetilde{\Delta}_l\right]\left[\frac{\mathcal{C}_{l-1}|\mathcal{F}_l||\mathcal{K}_l|(|\mathcal{K}_l|-1)}{|\mathcal{K}_l|^2}\widehat{H}_{l-1} + \frac{\mathcal{C}_{l-1}|\mathcal{K}_l||\mathcal{F}_l|}{|\mathcal{K}_l|^2}\widetilde{H}_{l-1}\right] \enspace , \\
		& = N_{l-1}\left[(|\mathcal{K}_l|-1)\widehat{\Delta}_{l} + \widetilde{\Delta}_l\right]\left[(|\mathcal{K}_l|-1)\widehat{H}_{l-1} + \widetilde{H}_{l-1}\right] \enspace , 
\end{align}
where $N_{l-1} = \mathcal{C}_{l-1}|\mathcal{F}_l|$. In the first approximation we use the fact that the terms with respect to $b, \gamma, \beta$ are $c_l$ times smaller than the term with respect to $W$. The second approximation uses the assumption that forward and backward order parameters are independent.

\subsection{Baseline}\label{s:baseline}

Baseline was an experiment of vanilla fully-connected networks trained on MNIST, with various $\sigma_w$ weight initializations. Result is shown in Fig.~\ref{fig:baseline}.
\begin{figure}[h]
\begin{center}
\includegraphics[height=2in,width=2in]{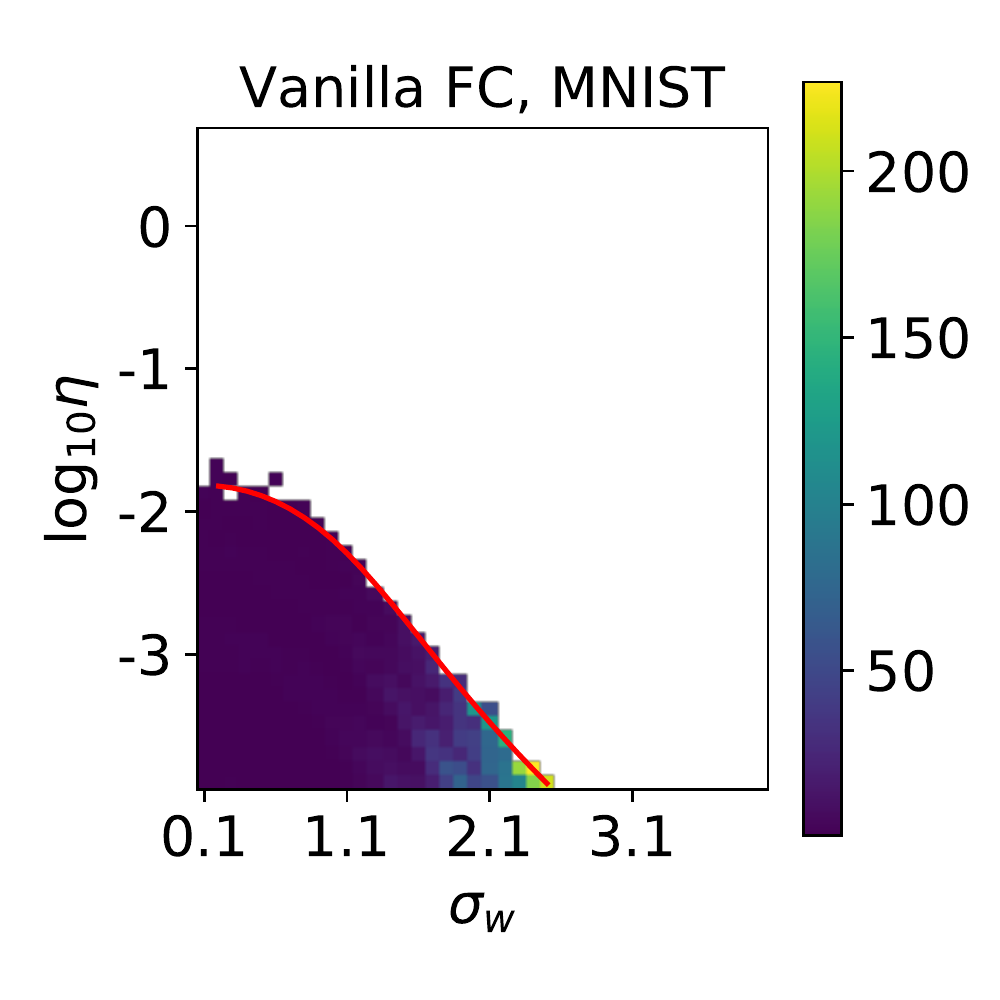}
\caption{Heatmap showing test loss as a function of $(\log_{10}{\eta},\gamma$) after 5 epochs of training for vanilla fully-connected feed forward network where the relation between weight initialization variance and maximal learning rate is studied.  \label{fig:baseline}}
\end{center}
\end{figure}

\subsection{Additional Experiments}\label{s:additon}
Our theory predicts that smaller $\gamma$ has the effect of greatly reducing the $\lambda_{max}$ of the FIM, and empirically networks with BatchNorm converge faster. Following this intuition, we performed additional experiments with VGG16 \cite{vgg} and Preact-Resnet18 \cite{preactresnet}, with various $\gamma$ initializations, fixed learning rate $0.1$, momentum $0.9$ and weight decay $0.0005$, trained on CIFAR-10. The result is average over 5 different independent trainings. We find that smaller $\gamma$ initialization indeed increases the speed of convergence, as shown below in Fig.~\ref{fig:experiment3}.
\begin{figure}[h]
\begin{center}
\includegraphics[height=1.7in,width=5.1in]{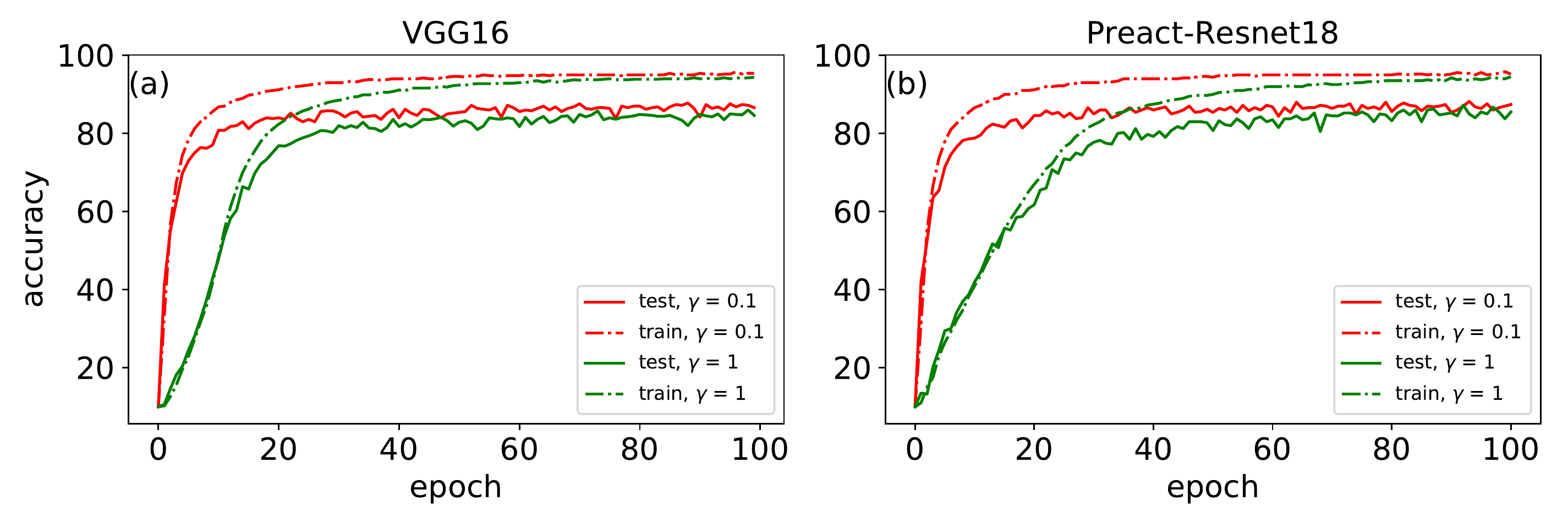}
\caption{Learning curves of (a) VGG16 and (b) Preact-Resnet18 training on CIFAR-10, with the same hyperparameters except $\gamma$ initialization. Results support that small $\gamma$ initialization helps faster convergence. \label{fig:experiment3}}
\end{center}
\end{figure}
\end{document}